\documentclass[letterpaper,10pt,twocolumn]{article}
\usepackage[T1]{fontenc}
\usepackage[textwidth=7in,textheight=9in,top=0.75in,columnsep=0.375in]{geometry}
\usepackage{newtxtext}
\usepackage{helvet,courier}
\usepackage[hyphens]{url}
\usepackage{graphicx}
\usepackage[authoryear,round]{natbib}
\usepackage{caption}
\usepackage{booktabs,array}
\usepackage{amsmath,amssymb,amsthm}
\usepackage{microtype}
\usepackage[unicode,hidelinks]{hyperref}
\newtheorem{theorem}{Theorem}
\newtheorem{proposition}[theorem]{Proposition}
\newtheorem{corollary}[theorem]{Corollary}
\newtheorem{lemma}[theorem]{Lemma}
\theoremstyle{definition}
\newtheorem{assumption}{Assumption}

\theoremstyle{remark}
\newtheorem{remark}{Remark}
\theoremstyle{plain}
\newcommand{\E}{\mathbb{E}}
\newcommand{\Pp}{\mathbb{P}}
\newcommand{\R}{\mathbb{R}}
\newcommand{\Aset}{\mathcal{A}}
\newcommand{\Asup}{\mathcal{A}_{\eta}}
\newcommand{\ind}{\perp\!\!\!\perp}
\newcommand{\TV}{\operatorname{TV}}
\newcommand{\doop}{\operatorname{do}}
\newcommand{\1}{\mathbf{1}}
\newcommand{\norm}[1]{\left\lVert #1\right\rVert}
\newcommand{\CuratedN}{99}
\newcommand{\ActionSupportedN}{80}
\newcommand{\SupportedN}{79}

\title{CaMeWM: A Causal Medical World Model for Generative Tumor Evolution and Treatment Planning}
\author{%
\begin{minipage}{\dimexpr\textwidth-2\tabcolsep\relax}
\centering
{\large Huanlong Gao, Xuelei He\textsuperscript{*}, Fengjun Zhao and Xiaowei He\textsuperscript{*}}\\[0.6em]
{\small School of Electronic Information, Northwest University, No.~1 Xuefu Avenue, Xi'an, 710127, China\endgraf}
\vspace{0.25em}
{\small Xi'an Key Lab of Radiomics and Intelligent Perception, Northwest University, No.~1 Xuefu Avenue, Xi'an, 710127, China\endgraf}
\vspace{0.5em}
{\small\textsuperscript{*}Co-corresponding authors.\endgraf}
\end{minipage}%
}
\date{}
\hypersetup{pdftitle={CaMeWM: A Causal Medical World Model for Generative Tumor Evolution and Treatment Planning},pdfauthor={Huanlong Gao; Xuelei He; Fengjun Zhao; Xiaowei He}}
\begin{document}
\let\cite\citep
\maketitle
\thispagestyle{plain}

\begin{abstract}
Treatment planning is intrinsically counterfactual: clinicians must compare how a patient's tumor and survival would evolve under competing regimens, whereas a retrospective cohort records only the regimen received and one follow-up scan. A treatment-conditioned generator trained on such data can therefore reproduce historical treatment allocation rather than simulate the interventional disease dynamics needed to rank regimens. We introduce \textbf{CaMeWM}, a causal medical world model for supported post-TACE tumor evolution and treatment planning. CaMeWM implements one intervention-to-decision loop rather than three independent prediction tasks: it first admits a queried regimen only for clinically comparable histories with empirical treatment support; it then generates a distribution of lesion-local follow-up worlds through an action-conditioned response descriptor, deterministic anchor, and residual 3D diffusion transition; and it finally maps sampled worlds to restricted mean survival time (RMST) with a response-Cox utility model. Under pre-treatment adjustment and positivity, this loop estimates a prespecified, support-restricted g-formula target; all selection and nuisance estimation are fold-local, and the induced policy is evaluated by cross-fitted, censoring-adjusted doubly robust estimation. Across 79 paired out-of-fold episodes, CaMeWM achieves 0.618 SSIM and 0.528 change-map SSIM for follow-up CT transition, while its generated-world survival pathway attains a 0.615 C-index; factual-only AIPW estimates a 9.60-week RMST gain (95\% CI 0.61--18.13) for the frozen policy. CaMeWM provides an inspectable basis for retrospective, support-restricted treatment planning.
\end{abstract}

\section{Introduction}
Treatment planning is intrinsically counterfactual.  When choosing among transarterial chemoembolization (TACE) regimens, clinicians need to compare how an otherwise similar patient's tumor and survival would evolve under each regimen.  Longitudinal CT makes the question inspectable, but each retrospective episode records only the regimen delivered and one follow-up state.  The alternative treatment worlds needed for comparison are unobserved.

Generative medical world models offer an appealing route to this problem.  Rather than predict prognosis alone, they can render a treatment-conditioned future tumor state and connect it to survival-based decision making.  Medical World Model (MeWM) established this direction by coupling an action policy, a tumor generator, and inverse survival dynamics \cite{yang2025mewm}.  However, visual plausibility or action conditioning alone is insufficient for retrospective treatment planning: a model can generate a credible follow-up CT while answering the wrong clinical question.

The obstacle is causal.  Let $H$ denote pre-treatment history, $A$ a complete regimen, and $X^+$ the follow-up CT.  In observational TACE cohorts, baseline tumor burden, hepatic function, and clinical status influence treatment selection and outcome.  Hence $p(X^+\mid H,A=a)$ need not equal $p(X^+\mid H,\doop(A=a))$ \cite{hernan2020whatif,shalit2017ite}.  Optimizing the former can reproduce historical allocation rather than treatment effect, or extrapolate to a regimen rarely observed for comparable patients.  Censoring creates a second problem: a generated state must also be mapped to a survival utility with appropriate adjustment.

A causal medical world model is therefore more than a conditional generator followed by a risk score.  As a computational system, it must couple three operations: determine whether a regimen can be simulated for a patient's pre-treatment history; synthesize the regimen-specific local transition while retaining both response-associated and direct treatment effects; and score sampled worlds without allowing held-out patients to influence the learned propensity, transition, or outcome functions.  Omitting any operation yields either unsupported extrapolation, an observational image predictor, or an optimistically evaluated decision rule.

We introduce \textbf{CaMeWM}, a causal medical world model for support-restricted post-TACE tumor evolution and treatment planning.  Its first stage admits an intervention query only when a complete regimen has fold-local joint-propensity support for the observed pre-treatment history.  Its second stage constructs an action-specific distribution of follow-up worlds: an auxiliary response descriptor captures coarse treatment response, a deterministic anchor preserves stable anatomy, and lesion-local residual diffusion represents remaining stochastic transition while retaining a direct treatment route.  Its third stage maps factual or sampled worlds to RMST with a response-Cox utility model and evaluates the resulting policy out of fold.  Thus, the causal content comes from the target intervention, pre-treatment adjustment, and support rule; the generator makes that target inspectable rather than creating causal validity by architecture alone.

CaMeWM is fitted fold-locally and evaluated as a decision system rather than as an image synthesizer alone.  Cross-fitted, censoring-adjusted doubly robust (DR) estimation evaluates the supported policy on held-out patients.  We distinguish factual-state diagnostics from the planning-time generated-world pathway, and use semi-synthetic stress tests for assumptions that retrospective data cannot verify.

Our contributions are threefold:
\begin{itemize}
  \item We formalize a support-restricted interventional world target for complete TACE regimens, operationalizing $\doop(A=a)$ only where fold-local joint propensity satisfies empirical positivity.
  \item We instantiate its g-formula with a response-grounded anchor--residual diffusion transition and a response-Cox RMST utility.  The simulator retains direct treatment effects, renders a local follow-up CT distribution, and exposes an interpretable response descriptor without claiming causal mediation.
  \item We provide a leakage-free decision protocol that ranks supported sampled worlds by RMST and uses censoring-adjusted, cross-fitted DR evaluation, matched causal baselines, transport diagnostics, controlled ablations, and semi-synthetic stress tests.
\end{itemize}
CaMeWM targets an intervention-specific distribution and expected utility within observed support.  It does not identify a unique alternate-world CT for an individual patient, nor does retrospective evaluation establish prospective clinical benefit.

\section{Related Work}
\paragraph{Generative World Models.}
World models simulate future states to support decision-making before action execution \cite{ha2018worldmodels,hafner2023mastering}. Medical extensions include radiographic representation learning \cite{yue2025chexworld}, motion-aware ultrasound guidance \cite{yue2025echoworld}, action-controllable surgical video synthesis \cite{koju2025surgical}, and longitudinal EHR event forecasting \cite{kraljevic2024foresight}. For treatment-conditioned imaging, MeWM uses diffusion to synthesize post-treatment tumor evolution \cite{yang2025mewm}, while CLARITY models context-aware disease trajectories in latent space for survival-aware evaluation \cite{ding2025clarity}. Causal-generative methods further model intervention-indexed outcomes from longitudinal histories \cite{alinezhad2026cdm,frauen2023deepace}. CaMeWM complements these lines by coupling follow-up CT simulation with a support-restricted causal target and factual-only cross-fitted AIPW evaluation.

\paragraph{Causal treatment learning.}
Balanced representations estimate heterogeneous effects under ignorability \cite{shalit2017ite}, while DeepACE combines longitudinal g-computation with targeted estimation \cite{frauen2023deepace}.  DR estimators remain consistent when either the propensity or outcome regression is correct \cite{bang2005dr}, and cross-fitting limits overfitting bias from learned nuisance functions \cite{chernozhukov2018dml}.  These methods motivate our adjustment and evaluation strategy, but they do not expose a generated 3D post-treatment state.

\paragraph{Survival-aware planning.}
Cox, DeepSurv, and discrete-time networks are common models for censored outcomes \cite{cox1972regression,lee2018deephit}.  RMST provides an interpretable utility without requiring proportional hazards \cite{royston2013rmst}.  CaMeWM integrates this utility with support-constrained tumor-world sampling, rather than estimating treatment effects from tabular trajectories alone.

\section{CaMeWM: Supported Interventional Worlds}
\paragraph{Overview.}
Figure~\ref{fig:method} organizes CaMeWM as a three-stage intervention-to-decision system.  Stage~I admits a regimen query only on empirical support; Stage~II constructs an action-specific distribution of follow-up worlds; and Stage~III assigns RMST utility and ranks supported actions.  This is an assumption-gated population identification claim: the SCM and pre-treatment adjustment identify the target under consistency, conditional exchangeability, and positivity, while the learned networks estimate observed-law components and cannot verify those assumptions.  Cross-fitting, censoring adjustment, and DR estimation form an external evaluation rail for the induced policy.

\begin{figure*}[t]
  \centering
  \includegraphics[width=0.94\textwidth]{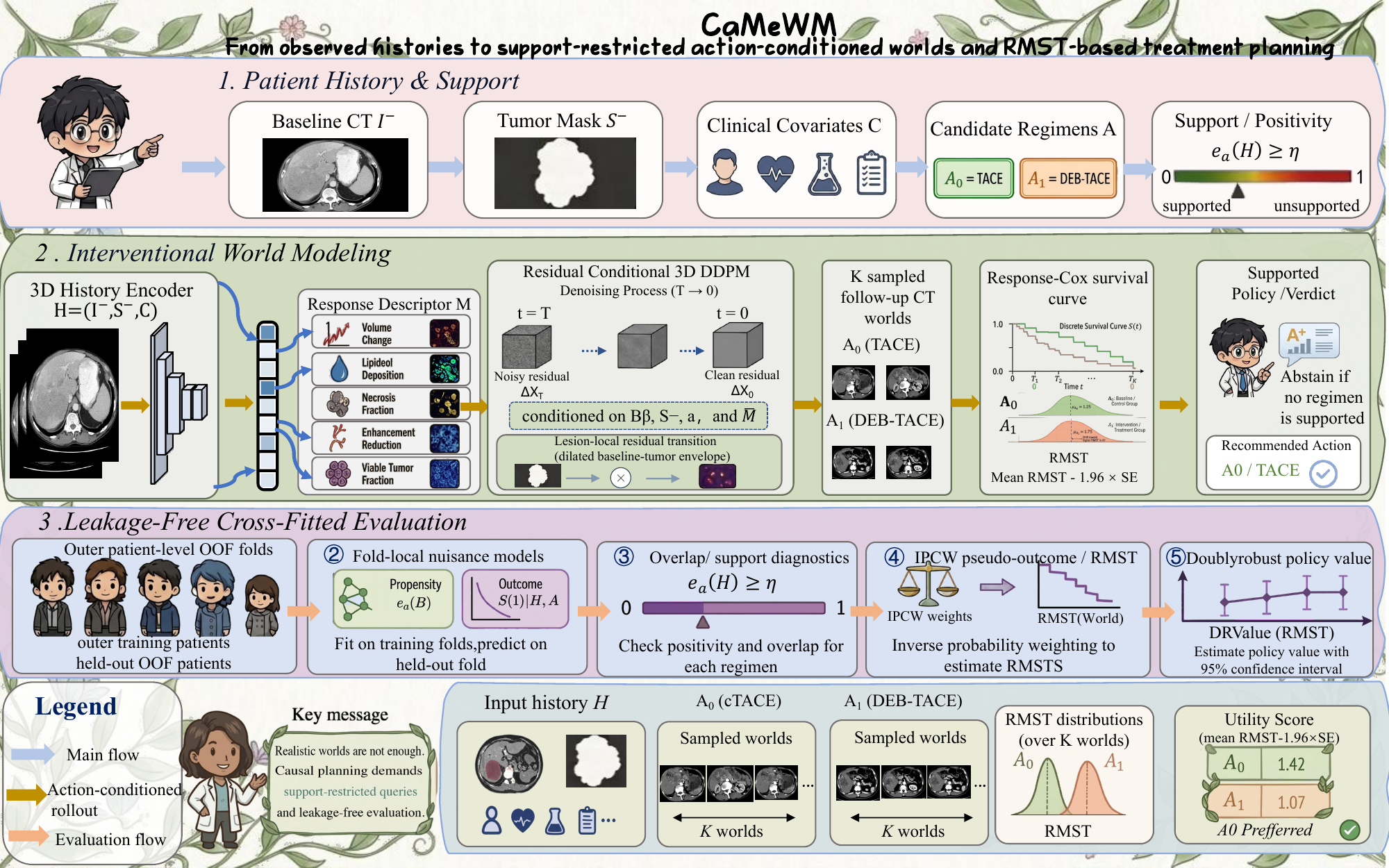}
  \caption{CaMeWM replaces observational assignment by a supported $\doop(A=a)$ query. Under consistency, conditional exchangeability, and positivity, the g-formula identifies the regimen-specific population world law and its RMST value on empirical support (Theorem~\ref{thm:main-identification}; full assumptions and proof in the supplement). The response code, deterministic anchor, residual DDPM, and survival head estimate components of this identified functional; their architectures neither establish exchangeability nor identify a unique patient's counterfactual CT. The lower rail performs leakage-free IPCW/DR policy evaluation. CT thumbnails are schematic, not patient data or results.}
  \label{fig:method}
\end{figure*}

\subsection{Stage I: Supported Interventional Target}
For patient $i$, let
\begin{equation}
 O_i=(H_i,A_i,M_i,X_i^+,\widetilde T_i,\Delta_i),
 \quad H=(I^-,S^-,C),
 \label{eq:observed}
\end{equation}
where $X^-=(I^-,S^-)$ is the baseline CT--mask state, $C$ contains measured pre-treatment clinical covariates, $A$ is a complete regimen, and $X^+=(I^+,S^+)$ is the registered post-treatment state. Event time is observed as $\widetilde T=\min(T,T^{\rm cen})$ with indicator $\Delta$, where $T^{\rm cen}$ is censoring time. For tumor counts $n^-,n^+$ and masked HU values $z^-,z^+$, $M=s(X^-,X^+)\in\mathbb R^5$ comprises relative tumor-count change, the high- and low-attenuation fractions, mean attenuation change, and viable-tumor fraction. Thus $M$ is a deterministic same-visit summary used as an auxiliary generation code; it is neither a pre-treatment confounder nor a biological mediator, and no front-door or path-specific claim is made through it.

The causal graph over substantive variables allows measured history to affect treatment, post-treatment state, and survival, while retaining direct treatment effects:
\begin{equation}
\begin{aligned}
 H&\rightarrow A, & H&\rightarrow X^+, & H&\rightarrow T,\\
 A&\rightarrow X^+, & A&\rightarrow T, & X^+&\rightarrow T.
\end{aligned}
 \label{eq:dag}
\end{equation}
For support threshold $\eta$, define $\mathcal A_\eta(h)=\{a:e_a(h)\ge\eta\}$.  For $a\in\mathcal A_\eta(h)$, $\doop(A=a)$ removes the observational factor $p(A\mid H)$ and replaces it by a point mass at $a$, while leaving the post-treatment mechanisms invariant.  Our target is the resulting world law.  Assume consistency and no interference, conditional exchangeability $A\perp\{M(a),X^+(a),U_\tau(a)\}\mid H$, and positivity on $\mathcal A_\eta(h)$.  Then
\begin{align}
 P_h^a(dm,dx)&=p(m,x\mid h,\doop(A=a))\notag\\
 &\overset{\rm id}{=}p(m,x\mid h,A=a)\notag\\
 &=p_M(m\mid h,a)\,p_X(x\mid h,a,m).
 \label{eq:factorization}
\end{align}
The marked equality is the causal identification step; the last equality is only an estimation ordering of the augmented joint law, not a causal arrow $M\to X^+$. The implemented response network replaces $p_M$ by a point mass, yielding a plug-in approximation that omits response-code uncertainty. If $H$ omits a treatment--outcome confounder, identification fails regardless of diffusion accuracy.

\paragraph{Supported action interface.}
We fit the joint propensity $e_a(h)=P(A=a\mid H=h)$ using only pre-treatment variables and treat each complete regimen as one action.  The fold-local classifier encodes the baseline CT and tumor mask with three 3D stages (16, 32, and 64 channels), concatenates this representation with eight clinical covariates and their missingness indicators, and predicts the two regimen probabilities through a 128-dimensional softmax head.  It is trained by joint cross-entropy and selected by inner-validation cross-entropy; no post-treatment state or outcome enters this model, and we apply no post-hoc probability calibration.  The resulting probability determines which worlds may be queried; outcome training also uses $w_i=\min\{10,1/\widehat e_{A_i}(H_i)\}$ \cite{austin2015iptw}.
We plan in clinically interpretable RMST units and normalize only for bounded theory:
\begin{equation}
\begin{aligned}
 Y_\tau(a)&=\min\{T(a),\tau\}\in[0,\tau],\\
 V_\tau(a,h)&=\E[Y_\tau(a)\mid H=h],\\
 U_\tau(a)&=Y_\tau(a)/\tau,\quad
 \overline V_\tau(a,h)=V_\tau(a,h)/\tau.
\end{aligned}
\label{eq:utility}
\end{equation}
Here $\tau=156.4286$ weeks (36 months). Planning scores, pseudo-outcomes, and reported gains use $V_\tau$ in weeks; only the stability bound uses $\overline V_\tau\in[0,1]$ \cite{royston2013rmst}.

\subsection{Stage II: Response-Grounded World Simulation}
\paragraph{Fold-local state completion.}
Segmentation measures state; it does not adjust confounding.  Because manual follow-up masks are incomplete, outer fold $k$ trains $g_\psi$ only on fold-training baseline CT/mask pairs and selects it by validation Dice.  It constructs factual $S^+=g_\psi(I^+)$ for response and survival training and generated $\widehat S^+=g_\psi(\widehat I^+)$ during rollout.  Thus
\begin{equation}
 p_X(i^+,s^+\mid h,a,m)
 =p_\theta(i^+\mid h,a,m)\,\delta_{g_\psi(i^+)}(s^+),
 \label{eq:state_transition}
\end{equation}
where $\delta_b$ is a point mass at $b$, not a joint CT--mask diffusion model.  Masks below 16 voxels trigger a fixed intensity fallback recorded in QC.  This fold-local construction prevents post-treatment test images or outcomes from fitting the segmentation nuisance, although the resulting follow-up masks remain proxy labels rather than manual ground truth.

\paragraph{Auxiliary response descriptor.}
The response vector is an inspectable auxiliary code in the coarse-to-fine parameterization of $p(m,x\mid h,a)$, not a causal mediator.  The network $f_\phi$ encodes $(I^-,S^-)$ and concatenates image features with $C$ and a one-hot action.  Fold-standardized targets use smooth $L_1$ loss,
\begin{equation}
 \widehat M(a)=f_\phi(H,a),\qquad
 \mathcal L_M=\operatorname{SmoothL1}(\widehat M,M).
 \label{eq:response}
\end{equation}
After inverse standardization and range clamping, this implements $\widehat P_M(\cdot\mid h,a)=\delta_{\widehat M(h,a)}$ and therefore omits code uncertainty. Predicted rather than factual codes are used with probability $0.5$ during diffusion training; validation and intervention rollouts always use predicted codes, so the factual response code never enters a reported policy.

\paragraph{Deterministic anchor and pseudo-mask reliability.}
Before stochastic refinement, an action- and response-conditioned anchor
$B_\beta(H,a,\widehat M(a))$ predicts a deterministic coarse follow-up CT.  It
captures the stable, large-scale treatment-conditioned state, leaving the
diffusion model to represent local residual variation.  ``Deterministic anchor
only'' in the ablation uses $B_\beta$ directly and removes the residual DDPM;
it is therefore a test of stochastic residual refinement rather than a removal
of the causal estimand, propensity restriction, or treatment conditioning.

Follow-up tumor masks are fold-local pseudo labels.  For patient $i$, we form a
reliability score $c_i\in[0,1]$ as the mean product of test-time-augmentation
vote agreement and probability margin in the predicted-or-baseline lesion
region.  Let $\widetilde S_i^+$ denote the resulting pseudo follow-up mask.  We
use its confidence-weighted union with the baseline mask,
\begin{equation}
 W_i=S_i^-+c_i\left[(S_i^-\vee\widetilde S_i^+)-S_i^-\right],
 \label{eq:confidence-mask}
\end{equation}
for lesion-weighted reconstruction, residual, and semantic losses, where $\vee$ and $\odot$ denote voxel-wise union and multiplication.  Thus,
low-confidence pseudo follow-up regions contribute less beyond $S_i^-$.  The
``uniform pseudo-mask weighting'' ablation fixes $c_i=1$ for every patient; it
does not remove pseudo masks, but instead treats all of them as equally
reliable.

Registered baseline and follow-up scans share most anatomy, so the model predicts local change instead of reconstructing the complete volume.  We define the anchor residual $R=(I^+-B_\beta)/2$ and its noisy version $R_r=\sqrt{\bar\alpha_r}R+\sqrt{1-\bar\alpha_r}\epsilon$, where $\epsilon\sim\mathcal N(0,I)$ and $r\in\{1,\ldots,200\}$ indexes a linear noise schedule.  A conditional 3D denoiser $\epsilon_\theta(R_r,B_\beta,S^-,r,A,M)$ predicts the noise.  Decoding forms $\widehat I^+=B_\beta+2\widehat R$ and gates residual change with a $9\times9\times9$ max-pooled baseline-tumor envelope (a four-voxel radius at the resampled 1-mm spacing).  Beyond noise-prediction MSE, the objective emphasizes global fidelity, tumor fidelity, and structure:
\begin{align}
 \mathcal L_X={}&\|\epsilon-\epsilon_\theta\|_2^2
 +\lambda_g\|\widehat I^+-I^+\|_1 \\
 &+\lambda_t\frac{\|W\odot(\widehat I^+-I^+)\|_1}
 {\|W\|_1+\varepsilon} \\
 &+\lambda_s(1-\operatorname{SSIM}_3),
 \label{eq:diffusion}
\end{align}
where $(\lambda_g,\lambda_t,\lambda_s)=(0.5,2,0.25)$, $\varepsilon=10^{-6}$ prevents division by zero, and $\operatorname{SSIM}_3$ is three-dimensional structural similarity \citep{wang2004ssim}.  These predictive losses estimate the two observational factors on the right side of Eq.~\eqref{eq:factorization}; they acquire an intervention interpretation only through its adjustment assumptions and support restriction.  The hard spatial gate limits unsupported global edits but also assumes that treatment-induced change lies inside the dilated baseline-tumor envelope.

\subsection{Stage III: Utility-Aware Supported Rollout}

\paragraph{Survival utility.}
For each factual or generated post-treatment state, we form a fold-standardized response vector $Z$ comprising the regimen, predicted response code, pre-treatment clinical covariates and missingness indicators, and eight lesion-local state statistics.  A response-Cox model maps $Z$ to a log-risk score; its $\ell_2$ penalty is selected on the inner validation partition by C-index.  With Breslow baseline hazard $\widehat H_0$ fitted on outer-fold training outcomes, it produces $\widehat S(t\mid Z)=\exp\{-\widehat H_0(t)\exp r_\omega(Z)\}$ and $\widehat R_\tau(Z)=\int_0^\tau\widehat S(t\mid Z)dt$.  The utility model is fitted on factual states $(I^+,g_\psi(I^+))$, whereas rollout substitutes predicted responses and sampled states $(\widehat I^+,g_\psi(\widehat I^+))$ and averages RMST over $K=8$ worlds.  This avoids training the outcome head on self-generated labels, but it leaves a factual-to-generated covariate shift; factual OOF calibration therefore cannot certify generated-state calibration.

\paragraph{Fold-local staged learning.}
The six modules form one estimator but not one gradient update. Within outer fold $k$, validation-selected segmentation is fitted first; propensity and response models are then fitted independently; the frozen response model supplies predicted codes to the anchor; the anchor initializes residual diffusion; and frozen segmentation and propensity models supply masks and weights to survival.  The parameter blocks $(\psi,\gamma,\phi,\beta,\theta,\omega)$ denote the segmenter, propensity model, response-descriptor predictor, anchor, residual denoiser, and response-Cox utility model, respectively. Symbolically,
\begin{equation}
 \widehat\psi_k\rightarrow(\widehat\gamma_k,\widehat\phi_k)
 \rightarrow\widehat\beta_k\rightarrow\widehat\theta_k
 \rightarrow\widehat\omega_k
 \rightarrow\text{OOF rollout/AIPW}.
 \label{eq:staged}
\end{equation}
Survival loss does not update the generator, and propensity, censoring, and factual-only AIPW evaluation remain external to policy fitting. This separation enforces fold-external nuisance prediction; it does not itself provide causal identification.

\paragraph{Supported intervention rollout.}
Observed data follow $A\sim e(\cdot\mid h)$; CaMeWM instead fixes $a$ when $\widehat e_a(h)\ge\eta=0.05$.  This is the implemented $\doop(A=a)$ operation.  It predicts $\widehat M(a)$, samples $K=8$ CTs, obtains RMST values $R_1(a),\ldots,R_K(a)$, and selects
\begin{equation}
 \widehat\pi(h)=\arg\max_{a:\widehat e_a(h)\ge\eta}
 \left\{\overline R(a)-1.96\,s_R(a)/\sqrt K\right\}.
 \label{eq:planning}
\end{equation}
Here $\overline R(a)=K^{-1}\sum_{j=1}^K R_j(a)$ and $s_R(a)$ is the sample standard deviation across worlds.  This is a conservative Monte Carlo ranking score.  We do not interpret it as a simultaneously calibrated finite-sample confidence bound over patients and actions.

\subsection{Cross-Fitted Evaluation and Theoretical Guarantee}
Let $Q_\tau(h,a,m,x)=\E[Y_\tau\mid h,a,m,x]$ be conditional RMST in weeks and $\overline Q_\tau=Q_\tau/\tau\in[0,1]$.
For probability distributions $P$ and $Q$, write $\TV(P,Q)=\sup_B|P(B)-Q(B)|$, where the supremum ranges over measurable events $B$.

\begin{theorem}[Supported interventional-world identification and oracle stability]
\label{thm:main-identification}
Under consistency, conditional exchangeability, and positivity, for every supported $(a,h)$,
\begin{align}
 V_\tau(a,h)=\iint &Q_\tau(h,a,m,x)p_X(x\mid h,a,m)\notag\\
 &p_M(m\mid h,a)\,dx\,dm.
 \label{eq:gformula}
\end{align}
If $\sup\TV(P_M,\widehat P_M)\le\epsilon_M$, $\sup\TV(P_X,\widehat P_X)\le\epsilon_X$, and $\sup|\overline Q_\tau-\widehat{\overline Q}_\tau|\le\epsilon_Q$, then
\begin{equation}
 \sup_{a,h}|\overline V_\tau(a,h)-\widehat{\overline V}_\tau(a,h)|
 \le\epsilon_M+\epsilon_X+\epsilon_Q,
 \label{eq:errorbound}
\end{equation}
and normalized regret on the same support is at most twice this bound; multiplying by $\tau$ converts both statements to weeks.
\end{theorem}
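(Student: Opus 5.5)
The identification part follows from standard g-computation, and the stability bound from a telescoping decomposition. For the identity \eqref{eq:gformula}, I fix a supported pair $(a,h)$ with $a\in\mathcal A_\eta(h)$. Positivity then gives $e_a(h)>0$, so conditioning on $\{H=h,A=a\}$ is well defined. I start from $V_\tau(a,h)=\E[Y_\tau(a)\mid H=h]$ and apply conditional exchangeability $A\perp\{M(a),X^+(a),U_\tau(a)\}\mid H$ to obtain $\E[Y_\tau(a)\mid H=h]=\E[Y_\tau(a)\mid H=h,A=a]$. Consistency replaces the potential outcome by the observed one, giving $\E[Y_\tau\mid H=h,A=a]$. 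The joint exchangeability also yields the identification of the world law in \eqref{eq:factorization}, namely $P_h^a=p(m,x\mid h,A=a)$. The tower property over $(M,X^+)$ then gives $\E[Y_\tau\mid h,a]=\iint \E[Y_\tau\mid h,a,m,x]\,p(m,x\mid h,a)$. Writing the joint density as $p_M\,p_X$ (a purely probabilistic chain-rule factorization) yields \eqref{eq:gformula}. Censoring never enters, because $Q_\tau$ is defined on the uncensored $Y_\tau$; the estimation of $Q_\tau$ under censoring is a separate issue outside the theorem.

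For the bound \eqref{eq:errorbound}, I work in normalized units, so that $\overline Q_\tau,\widehat{\overline Q}_\tau\in[0,1]$; I assume the estimate is clipped to $[0,1]$. I define $\widehat{\overline V}_\tau$ as the plug-in integral of $\widehat{\overline Q}_\tau$ against $\widehat P_X\widehat P_M$, and telescope:
\[
\overline V-\widehat{\overline V}=\iint(\overline Q-\widehat{\overline Q})\,P_XP_M+\iint\widehat{\overline Q}\,(P_X-\widehat P_X)P_M+\iint\widehat{\overline Q}\,\widehat P_X(P_M-\widehat P_M).
\]
The first term is at most $\epsilon_Q$ because $P_XP_M$ is a probability measure. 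For the second and third terms I use the key inequality $|\int f\,d(P-Q)|\le\TV(P,Q)$ for measurable $f$ with values in $[0,1]$. This holds because $\int f\,d(P-Q)=\int_0^1(P-Q)(\{f>t\})\,dt$. Applied to the inner $x$-integral for each $m$ and then averaged over $P_M$, the second term is at most $\epsilon_X$. For the third term, the function $m\mapsto\int\widehat{\overline Q}\,\widehat P_X(dx\mid h,a,m)$ lies in $[0,1]$, so that term is at most $\epsilon_M$. Taking the supremum over supported $(a,h)$ gives \eqref{eq:errorbound}.

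For the regret claim, let $\pi^*(h)$ maximize $\overline V_\tau(\cdot,h)$ over $\mathcal A_\eta(h)$ and let $\widehat\pi(h)$ maximize $\widehat{\overline V}_\tau$ over the same set. With $\epsilon=\epsilon_M+\epsilon_X+\epsilon_Q$, I write
\[
\overline V(\pi^*)-\overline V(\widehat\pi)=[\overline V(\pi^*)-\widehat{\overline V}(\pi^*)]+[\widehat{\overline V}(\pi^*)-\widehat{\overline V}(\widehat\pi)]+[\widehat{\overline V}(\widehat\pi)-\overline V(\widehat\pi)].
\]
The middle bracket is $\le0$ by the optimality of $\widehat\pi$, and each outer bracket is at most $\epsilon$, so the regret is at most $2\epsilon$ pointwise in $h$ and hence also after averaging over $H$. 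Multiplying by $\tau$ converts the statements to weeks.

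The main obstacle is bookkeeping rather than depth. First, the sup-TV assumptions must hold uniformly over conditioning values $(h,a,m)$; in particular the bound on $P_X$ has to hold for every $m$ in the support of $P_M$, not only almost everywhere. Second, the planner in \eqref{eq:planning} uses the lower-confidence Monte Carlo score and the estimated support $\widehat e_a\ge\eta$ rather than the exact $\widehat{\overline V}$ and the true support. I would state the regret bound for the idealized plug-in maximizer over a fixed common support, and remark that Monte Carlo error and support misclassification add separate terms.
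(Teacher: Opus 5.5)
Your proposal is correct and follows the paper's argument: exchangeability, consistency and the chain rule give the g-formula; a one-factor-at-a-time telescoping, bounded via the variational characterization of total variation for $[0,1]$-valued integrands, gives the stability bound; and the three-term argmax decomposition gives the regret. The only difference is the order of the hybrids: you swap $\overline Q_\tau$ for $\widehat{\overline Q}_\tau$ under the true law and then integrate $\widehat{\overline Q}_\tau$ against the factor differences, which is why you need $\widehat{\overline Q}_\tau\in[0,1]$ (the supplement's version of the theorem assumes this, and the Cox-RMST head satisfies it automatically), whereas the paper swaps under $\widehat P$ first and then integrates the true $Q_\tau$ against the differences.
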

\noindent\emph{Scope of the bound.}
Equation~\eqref{eq:errorbound} is an oracle population stability statement conditional on the stated factor-wise errors; it is not a finite-sample certificate for the fitted implementation.  In particular, the retrospective cohort does not identify $\epsilon_M$, $\epsilon_X$, or $\epsilon_Q$.

\noindent\emph{Proof sketch.}
Exchangeability identifies the intervention by adjustment over $H$; consistency and the chain rule yield Eq.~\eqref{eq:gformula}. Adding and subtracting one learned factor at a time gives Eq.~\eqref{eq:errorbound}; inserting the estimated maximizer between oracle values gives the regret bound. We instead report factor-wise OOF diagnostics and test exact behavior only under semi-synthetic ground truth. Full assumptions and proofs are in the supplement.

Evaluation additionally assumes independent censoring through $\tau$ under a fold-training marginal censoring model \citep{hernan2020whatif}. Each outer fold supplies nuisance predictions for held-out patients. If $k(i)$ is patient $i$'s fold, $\widehat G_{-k(i)}(t-)$ is the fold-training Kaplan--Meier estimate of marginal censoring survival, giving
\begin{equation}
 Y_i^\tau=\int_0^\tau
 \frac{\1(\widetilde T_i\ge t)}
 {\max\{\widehat G_{-k(i)}(t-),10^{-3}\}}\,dt.
 \label{eq:ipcw_rmst}
\end{equation}
For policy evaluation, $\pi_i=\widehat\pi(H_i)$ is the already frozen CaMeWM recommendation.  Separately, $\widehat Q^{\mathrm{fact}}_{\tau,a}(H_i)$ denotes a fold-local clinical RMST nuisance fit only from pre-treatment clinical history, factual regimen, and observed survival in the outer training fold.  It excludes $X^+$, response descriptors, generated worlds, pseudo masks, and all post-treatment information.  With this factual nuisance,
\begin{align}
 \omega_i^\pi&=\frac{\1(A_i=\pi_i)}
 {\max\{\widehat e_{A_i}(H_i),0.01\}},\notag\\
 \widehat V_{\rm AIPW,\tau}(\pi)&=\frac1n\sum_i
 \widehat\psi_i^{\mathrm{fact}}(\pi),\notag\\
 \widehat\psi_i^{\mathrm{fact}}(\pi)&=
 \widehat Q^{\mathrm{fact}}_{\tau,\pi_i}(H_i)
 +\omega_i^\pi\left\{Y_i^\tau-
 \widehat Q^{\mathrm{fact}}_{\tau,A_i}(H_i)\right\}.
 \label{eq:aipw}
\end{align}
Subject to the causal assumptions and a correct censoring model, this factual-only AIPW estimator is consistent if either its treatment propensity or its factual outcome regression is correct \cite{bang2005dr,chernozhukov2018dml}.  The world model constructs $\widehat\pi$ but is not an outcome nuisance in Eq.~\eqref{eq:aipw}; consequently, the classical double-robustness statement applies to the stated treatment/outcome nuisance pair.  It neither validates generated-state calibration nor repairs unmeasured confounding.

\section{Experiments}
\subsection{Data and Leakage-Free Protocol}
\paragraph{Cohort and imaging state.}
We use the public HCC-TACE-Seg collection of paired pre/post-TACE CT, treatment, response, and survival records \cite{moawad2023hcctace}.  We predeclare two represented joint regimens and retain \SupportedN{} outcome-blind eligible episodes (50 TACE and 29 DEB-TACE) after excluding one episode with an empty released tumor mask; eligibility and imaging QC do not use survival outcomes.  Baseline label 2 (Mass/HCC tumor) defines the lesion state.  Follow-up CT is rigidly registered to the baseline grid, resampled to 1-mm spacing, and evaluated in a shared $96^3$ physical crop; fold-local segmenters supply follow-up masks when manual masks are unavailable.  The complete label audit, registration contract, and QC artifacts are in Supplementary.

\paragraph{Protocol.}
We use patient-level five-fold outer CV, stratified by regimen and event/time strata, with an internal validation split within each outer-training partition.  All preprocessing statistics, segmenter, propensity, response, transition, and survival models are fit fold-locally; each patient receives one OOF prediction.  Checkpoints are selected only on validation data.  We report 2,000 patient-bootstrap intervals and paired differences using identical resampled patients.  Full selection rules, training settings, and hardware details are deferred to Supplementary.
\begin{table*}[t]
\centering
{\small
\setlength{\tabcolsep}{5.5pt}
\begin{tabular}{@{}lcccc@{}}
\toprule
Method & Tumor HU-MAE $\downarrow$ & $\Delta$SSIM $\uparrow$ & PSNR $\uparrow$ & SSIM $\uparrow$ \\
\midrule
Persistence & 57.192 [52.069, 63.049] & 0.157 [0.135, 0.180] & 16.877 [16.148, 17.661] & 0.469 [0.443, 0.497] \\
3D U-Net & 26.944 [23.224, 31.104] & 0.523 [0.481, 0.566] & 19.157 [18.359, 19.974] & 0.610 [0.584, 0.637] \\
CVAE-3D & 32.300 [28.065, 37.105] & 0.508 [0.464, 0.549] & 18.815 [18.027, 19.670] & 0.587 [0.562, 0.612] \\
Mask-DDPM & 63.721 [58.270, 69.962] & 0.051 [0.026, 0.080] & 16.665 [15.966, 17.362] & 0.451 [0.425, 0.477] \\
MeWM$^\ast$  & 62.910 [57.562, 68.555] & 0.074 [0.048, 0.102] & 16.684 [15.993, 17.373] & 0.438 [0.412, 0.465] \\
\textbf{CaMeWM} & \textbf{27.145 [23.139, 31.534]} & \textbf{0.528 [0.484, 0.569]} & \textbf{19.193 [18.353, 20.112]} & \textbf{0.618 [0.591, 0.645]} \\
\bottomrule
\end{tabular}
}
\caption{Factual post-TACE CT generation on 79 paired OOF episodes. Brackets denote patient-bootstrap 95\% confidence intervals. Tumor HU-MAE is reported after inverse transformation to Hounsfield units. $^\ast$ Task-matched reimplementation of MeWM's action-conditioned
generative-dynamics principle, adapted to the two regimens and public
HCC-TACE-Seg outputs available in this study. All methods use the same
input state, preprocessing, and patient-level OOF folds.}
\label{tab:generation}
\end{table*}
\begin{table*}[t]
\centering

\newcommand{\estci}[3]{%
  \begin{tabular}[c]{@{}c@{}}
  #1\\
  {[#2, #3]}
  \end{tabular}%
}

{
\small
\begin{tabular}{@{}lccccc@{}}
\toprule
Planning method &
\begin{tabular}[c]{@{}c@{}}
Observed-regimen\\
agreement $\uparrow$
\end{tabular} &
\begin{tabular}[c]{@{}c@{}}
Macro-F1\\
$\uparrow$
\end{tabular} &
\begin{tabular}[c]{@{}c@{}}
Balanced accuracy\\
$\uparrow$
\end{tabular} &
\begin{tabular}[c]{@{}c@{}}
CaMeWM-only\\
support$^{\dagger}$ $\uparrow$
\end{tabular} &
\begin{tabular}[c]{@{}c@{}}
CaMeWM-only\\
abstention$^{\dagger}$ $\downarrow$
\end{tabular} \\
\midrule

CoxPH
& \estci{0.367}{0.266}{0.468}
& \estci{0.269}{0.210}{0.319}
& \estci{0.500}{0.500}{0.500}
& -- & -- \\

DeepSurv
& \estci{0.342}{0.241}{0.443}
& \estci{0.310}{0.223}{0.409}
& \estci{0.429}{0.337}{0.516}
& -- & -- \\

IPTW-Cox
& \estci{0.367}{0.266}{0.468}
& \estci{0.269}{0.210}{0.319}
& \estci{0.500}{0.500}{0.500}
& -- & -- \\

MeWM$^{\ast}$
& \estci{0.367}{0.266}{0.468}
& \estci{0.269}{0.210}{0.319}
& \estci{0.500}{0.500}{0.500}
& -- & -- \\

\textbf{CaMeWM}
& \estci{\textbf{0.506}}{0.392}{0.620}
& \estci{\textbf{0.503}}{0.388}{0.610}
& \estci{\textbf{0.523}}{0.401}{0.631}
& \estci{0.597}{0.561}{0.634}
& \estci{0.000}{0.000}{0.000} \\

\midrule
\multicolumn{6}{@{}l@{}}{%
\textit{CaMeWM factual-only AIPW:}
Observed RMST $=103.62$ weeks;\quad
AIPW policy value $=113.22$ [102.34, 124.84] weeks.} \\

\multicolumn{6}{@{}l@{}}{%
\textit{AIPW gain}
$=\mathbf{+9.60}$ [0.61, 18.13] weeks;\quad
Policy ESS $=37.4$;\quad
maximum correction weight $=2.66$ (cap $=10$).} \\

\bottomrule
\end{tabular}
}

\caption{Treatment-planning behavior and factual-only causal evaluation over
five patient-level out-of-fold folds. Main estimates are shown above
patient-bootstrap 95\% confidence intervals. Agreement metrics compare
recommendations with observed historical regimens and are behavioral
references, not counterfactual treatment truth. $^{\dagger}$ Defined only for
CaMeWM, which applies a propensity-support guardrail; ``--'' denotes not
applicable. $^{\ast}$ Task-matched reimplementation inspired by MeWM's
action-conditioned generative-dynamics principle.}
\label{tab:planning}
\end{table*}
\subsection{Comparators, Metrics, and Statistics}
We compare persistence, 3D U-Net, CVAE-3D, mask-DDPM, and MeWM generation; the latter retains action-conditioned diffusion but omits CaMeWM's response factor and causal adjustment.  Primary generation metrics are tumor HU-MAE and change-map $\Delta$SSIM, with global fidelity metrics reported in Table~\ref{tab:generation}.  CoxPH, DeepSurv, IPTW-Cox, and MeWM provide survival and planning comparators under the same folds.  We report Uno's C-index \cite{uno2011cindex}, IBS, RMST MAE, and Harrell's C-index for survival.  Historical-regimen agreement, macro-F1, and balanced accuracy are behavioral references, not counterfactual ground truth.

The primary policy estimate is factual-only cross-fitted AIPW-RMST, using $\eta=0.05$ support restriction and a correction-weight cap of 10; its outcome nuisance never uses generated states.  We report support, abstention, and policy ESS alongside the paired AIPW interval.  Propensity balance, generated-to-factual survival transport, $K$-world stability, and pseudo-mask audits are diagnostic analyses in Supplementary Figs.~S1--S4.

\begin{table*}[t]
\centering
{\small
\begin{tabular}{lcccc}
\toprule
Variant & Normalized tumor MAE $\downarrow$ & SSIM $\uparrow$ & Global HU-MAE $\downarrow$ & $\Delta$SSIM $\uparrow$ \\
\midrule
\textbf{Full CaMeWM} & 0.070 [0.059, 0.082] & 0.618 [0.591, 0.644] & 61.102 [54.384, 68.512] & 0.528 [0.483, 0.570] \\
Causal anchor only & 0.070 [0.060, 0.081] & 0.612 [0.585, 0.637] & 61.139 [54.508, 68.499] & 0.526 [0.482, 0.569] \\
w/o response-code conditioning & 0.071 [0.060, 0.082] & 0.618 [0.591, 0.643] & 61.198 [54.274, 68.508] & 0.528 [0.485, 0.571] \\
w/o direct action path & 0.075 [0.064, 0.087] & 0.617 [0.591, 0.644] & 62.239 [55.086, 69.746] & 0.524 [0.480, 0.566] \\
w/o confidence weighting & 0.070 [0.060, 0.082] & 0.618 [0.592, 0.644] & 61.197 [54.610, 68.239] & 0.527 [0.482, 0.570] \\
\bottomrule
\end{tabular}
}
\caption{Controlled generation ablation over the same five patient-level out-of-fold folds. Normalized tumor MAE is computed on the model's $[-1,1]$ CT scale and is not numerically comparable to the HU-MAE in Table~\ref{tab:generation}. ``Deterministic anchor only'' removes residual-DDPM refinement; ``uniform pseudo-mask weighting'' sets all pseudo-mask confidence weights to one. Brackets denote patient-bootstrap 95\% confidence intervals.}
\label{tab:ablation}
\end{table*}

\subsection{Results}
\paragraph{Factual transition fidelity.}
Table~\ref{tab:generation} evaluates factual post-TACE transition prediction on the 79 paired OOF episodes.  CaMeWM has the strongest structural scores by point estimate ($\Delta$SSIM $0.528$ and SSIM $0.618$), while its tumor HU-MAE ($27.145$) is essentially tied with the deterministic 3D U-Net ($26.944$) within bootstrap uncertainty.  It improves on CVAE-3D and the action-conditioned diffusion baselines in all four reported metrics.  Thus the evidence supports a structural-fidelity advantage and competitive local intensity fidelity.
\paragraph{Survival utility on factual and generated states.}
\begin{table}[t]
\centering
{\small
\setlength{\tabcolsep}{5.5pt}
\begin{tabular}{@{}lcccc@{}}
\toprule
Method & Uno C $\uparrow$ & IBS $\downarrow$ & \shortstack{RMST\\MAE $\downarrow$} & \shortstack{C-index $\uparrow$\\95\% CI} \\
\midrule
CoxPH & 0.580 & 0.202 & 39.28 & \shortstack{0.581\\ \mbox{[0.500, 0.661]}} \\
DeepSurv & 0.592 & \textbf{0.200} & 39.17 & \shortstack{0.597\\\mbox{[0.525, 0.670]}} \\
IPTW-Cox & 0.603 & \textbf{0.200} & \textbf{38.73} & \shortstack{0.609\\\mbox{[0.531, 0.689]}} \\
MeWM$^\ast$  & 0.461 & 0.242 & 44.53 & \shortstack{0.462\\\mbox{[0.390, 0.540]}} \\
\shortstack[l]{CaMeWM factual} & 0.589 & 0.211 & 40.80 & \shortstack{0.592\\\mbox{[0.521, 0.662]}} \\
\shortstack[l]{\textbf{CaMeWM} \\ generated worlds} & \textbf{0.612} & \underline{0.202} & \underline{39.23} & \shortstack{\textbf{0.615}\\\mbox{[0.546, 0.688]}} \\
\bottomrule
\end{tabular}
}
\caption{Patient-level out-of-fold survival prediction over five folds. The generated-world row evaluates the complete action-conditioned transition-and-utility path. Brackets denote patient-bootstrap 95\% confidence intervals. $^\ast$ Task-matched reimplementation inspired by MeWM's action-conditioned generative-dynamics principle.}
\label{tab:survival}
\end{table}

Table~\ref{tab:survival} separates a diagnostic that consumes the observed follow-up state from the planning-time path that consumes eight generated worlds.  The factual-state response-Cox utility model reaches C-index $0.592$ (95\% CI $[0.521,0.662]$).  Replacing the factual state with generated worlds yields C-index $0.615$ ($[0.546,0.688]$), Uno C $0.612$, IBS $0.202$, and RMST MAE $39.23$ weeks.  This end-to-end result is comparable to the strongest tabular baseline, IPTW-Cox (C-index $0.609$), but their intervals overlap; we therefore report competitive rather than superior survival discrimination.
\begin{figure}[t]
\centering
\includegraphics[width=0.40\textwidth]{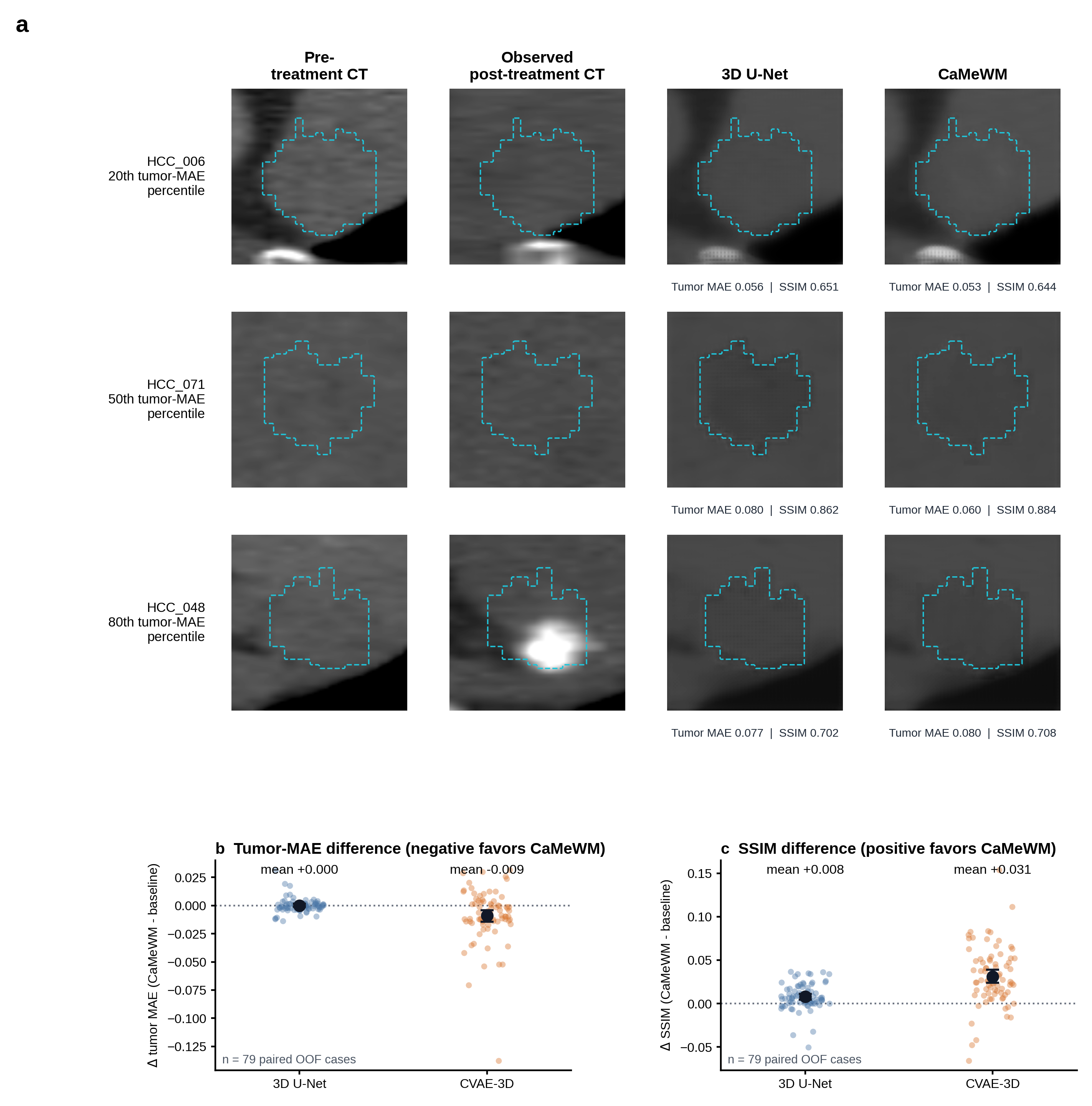}
\caption{Out-of-fold transition fidelity and paired comparison. CaMeWM synthesizes action-conditioned follow-up CT volumes in registered
image space, rather than tumor masks. Each row shows pre-treatment CT,
observed post-treatment CT, and predictions from 3D U-Net and the mean
of eight CaMeWM worlds. Cyan dashed contours mark the pre-treatment
lesion only for orientation; they are neither follow-up annotations nor
model outputs. Tumor-region MAE uses a fold-local pseudo post-treatment
mask, whereas SSIM is computed over the complete follow-up CT. Rows are
deterministically selected at the 20th, 50th, and 80th MAE percentiles.}
\label{fig:qualitative}
\end{figure}
\paragraph{Planning behavior and causal diagnostics.}
Table~\ref{tab:planning} reports observed-regimen agreement, macro-F1, and
balanced accuracy as behavioral references rather than counterfactual truth.
CaMeWM obtains $0.506$ agreement and $0.503$ macro-F1, versus $0.342$--$0.367$
agreement for the baselines. Its recommendations have mean estimated support
$0.597$ and no abstentions at the prespecified threshold. Under factual-only,
cross-fitted AIPW evaluation, CaMeWM yields an estimated RMST gain of
$9.60$ weeks [0.61, 18.13], with policy ESS $37.4$. This internal
observational estimate does not establish prospective clinical benefit.

\paragraph{Controlled generation ablation.}
Table~\ref{tab:ablation} isolates generation components under the same five
folds. Removing the direct action path produces the largest degradation
(tumor MAE $0.075$ versus $0.070$; HU-MAE $62.239$ versus $61.102$),
supporting a direct treatment-to-transition route. Other changes are small
with overlapping intervals and are interpreted as sensitivity evidence. Supplementary Fig.~S5 uses a semi-synthetic structural model with known
effects to assess ATE and policy recovery, DR identity, positivity, factor
error, Monte Carlo stability, and hidden-confounding sensitivity. These
controlled checks validate estimator behavior, not clinical benefit.

\section{Conclusion and Limitations}
CaMeWM integrates action-conditioned CT simulation, support-restricted treatment comparison, and cross-fitted factual-only AIPW evaluation, yielding competitive structural fidelity, stable recommendations, and a positive policy-value estimate. Limitations include the small retrospective cohort, only two supported regimens, partition-dependent uncertainty, and no external or prospective validation. Identification relies on measured exchangeability, positivity, and correct censoring adjustment; marginal IPCW may be biased under informative censoring. Predicted masks, unvalidated response descriptors, deterministic uncertainty, and a single transition further limit dynamic-policy inference. CaMeWM is a research framework, not an autonomous clinical system.

\clearpage
\onecolumn
\phantomsection
\pdfbookmark[0]{Supplementary Material}{supplementary-material}
\begin{center}
{\LARGE\bfseries Supplementary Material}\par\medskip
{\large CaMeWM: A Causal Medical World Model for Generative Tumor Evolution and Treatment Planning}
\end{center}
\setcounter{secnumdepth}{2}
\setcounter{section}{0}
\setcounter{subsection}{0}
\setcounter{equation}{0}
\setcounter{table}{0}
\setcounter{figure}{0}
\setcounter{theorem}{0}
\setcounter{assumption}{0}
\setcounter{definition}{0}
\setcounter{remark}{0}
\renewcommand{\thesection}{S\arabic{section}}
\renewcommand{\thesubsection}{\thesection.\arabic{subsection}}
\renewcommand{\theequation}{S\arabic{equation}}
\renewcommand{\thetable}{S\arabic{table}}
\renewcommand{\thefigure}{S\arabic{figure}}
\renewcommand{\thetheorem}{S\arabic{theorem}}
\renewcommand{\theassumption}{S\arabic{assumption}}
\renewcommand{\thedefinition}{S\arabic{definition}}
\renewcommand{\theremark}{S\arabic{remark}}
\renewcommand{\theHsection}{supp.\arabic{section}}
\renewcommand{\theHsubsection}{supp.\arabic{section}.\arabic{subsection}}
\renewcommand{\theHequation}{supp.\arabic{equation}}
\renewcommand{\theHtable}{supp.\arabic{table}}
\renewcommand{\theHfigure}{supp.\arabic{figure}}
\renewcommand{\theHtheorem}{supp.\arabic{theorem}}
\renewcommand{\theHassumption}{supp.\arabic{assumption}}
\renewcommand{\theHdefinition}{supp.\arabic{definition}}
\renewcommand{\theHremark}{supp.\arabic{remark}}

\subsection*{Overview}
We give an implementation-aligned causal foundation for a medical world model
that simulates post-treatment tumor states and ranks supported TACE regimens.
An explicit structural causal model identifies supported-regimen restricted
mean survival time (RMST) through a response-factorized g-formula.  A
factor-wise total-variation bound propagates response-proxy, image-transition,
and survival-model errors to treatment-value error and policy regret.
Cross-fitted factual-only AIPW evaluation is consistent when either the
joint-regimen propensity or a separately fitted factual clinical outcome
regression is correct, subject to the stated treatment and censoring
assumptions.  A Monte Carlo result quantifies
the sampling error incurred when integrating over generated follow-up states.
The current implementation uses a deterministic response-proxy predictor, a
residual conditional DDPM, a clipped joint-IPTW weighted response-Cox survival
model, and a support-restricted Monte Carlo lower-confidence score.  We state
separately the stronger simultaneous calibration condition that would be
needed for a finite-sample LCB near-optimality guarantee.  Finally, we show why
neither unmeasured-confounding bias nor a patient's unique cross-world CT image
is identified from these observational data.

\section{Scope of the Theoretical Claim}

MeWM models treatment-conditioned tumor evolution and uses a survival model to
score simulated states \cite{yang2025mewm}.  CaMeWM changes the statistical
object being estimated.  Rather than interpreting an action-conditioned image
generator as causal by construction, CaMeWM defines a treatment value under an
explicit structural causal model (SCM), states the assumptions under which that
value is identified, and evaluates the frozen resulting policy with an
orthogonal factual-only AIPW estimator.  Recent AAAI causal learning papers similarly connect
an SCM to an estimand, an estimator, and a testable empirical consequence.  The
relevant precedents cover patient trajectories \cite{frauen2023deepace}, causal
representations \cite{li2024cfi}, dynamic interventions
\cite{cinquini2025causal}, and treatment-effect uncertainty
\cite{horii2024uncertainty}.

The theory identifies \emph{interventional distributions and expected treatment
values on the observed support}.  It does not identify an unobserved patient's
unique alternate-world CT image, and it does not remove bias from unmeasured
confounding.  These boundaries are formalized in
Proposition~\ref{prop:nonidentification} and
Remark~\ref{rem:crossworld}.

This supplementary material accompanies the main text of this preprint
and is aligned to its fixed execution path:
joint action encoding, two curated regimens, fold-specific preprocessing,
deterministic response prediction, residual diffusion, response-Cox survival,
fixed five-fold patient-level out-of-fold evaluation, and eight Monte
Carlo worlds per candidate regimen.  Population theorems are distinguished from engineering
approximations.  In particular, the real-data Monte Carlo LCB is not described
as a calibrated simultaneous confidence bound.

Throughout this supplement, a \emph{factual state} is the observed registered
follow-up CT, whereas a \emph{generated world} is a sampled follow-up CT from
the fitted transition model.  The generated-world survival row in the main
paper is an end-to-end transition-and-utility diagnostic using $K=8$ worlds; it
is not a causal policy-value estimate.  The primary policy-value result instead
uses the factual-only, cross-fitted AIPW estimator in
Section~\ref{sec:factual-aipw}, whose outcome nuisance receives only
pre-treatment clinical history, factual regimen, and observed survival.

\subsection{Main-paper correspondence and reading map}

The supplement is organized around the main paper's central claims.
Table~\ref{tab:claim-map} makes the correspondence explicit so that no
theoretical statement is detached from its implementation or empirical test.

\begin{table}[ht]
\centering
\small
\caption{Claim-to-evidence map.  ``Required evidence'' denotes evidence needed
to activate the corresponding main-paper claim, not optional analysis.}
\label{tab:claim-map}
\begin{tabular}{p{0.25\linewidth}p{0.28\linewidth}p{0.39\linewidth}}
\toprule
Main-paper contribution & Formal support & Required implementation and evidence \\
\midrule
Support-restricted interventional world distribution &
Theorem~\ref{thm:identification} and its complete proof &
Joint-regimen propensity, explicit support set, response-factorized g-formula,
overlap diagnostics, and semi-synthetic identification recovery. \\
Response-factorized residual transition &
Theorem~\ref{thm:value-error} and Corollary~\ref{cor:plugin-regret} &
Operational response proxies, lesion-local residual DDPM, controlled component
ablations, factor-wise errors, and empirical bound coverage. \\
RMST world integration and supported policy &
Lemma~\ref{lem:mc} and Theorem~\ref{thm:lcb} &
Response-Cox RMST, $K=8$ worlds per action, sampling-convergence analysis, and
support abstention.  The current score is not claimed to be a calibrated
finite-sample confidence bound. \\
Cross-fitted factual policy evaluation &
Theorem~\ref{thm:factual-aipw}, its proof, and the censoring identity &
Outer-fold predictions, IPCW RMST pseudo-outcomes, factual-only AIPW value,
overlap/balance/ESS diagnostics, and patient-level bootstrap intervals. \\
Claim boundaries &
Proposition~\ref{prop:nonidentification} and Remark~\ref{rem:crossworld} &
Hidden-confounding sensitivity and explicit exclusion of unique individual CT
and prospective-benefit claims. \\
\bottomrule
\end{tabular}
\end{table}

\section{Setup and Causal Medical World Model}

\subsection{Observed data and treatment value}

For patient $i$, the observed record is
\begin{equation}
 O_i=(H_i,A_i,M_i,X_i^+,\widetilde T_i,\Delta_i),
 \qquad i=1,\ldots,n,
 \label{eq:observed-data}
\end{equation}
where $H=(X^-,S^-,C)$ combines the registered pre-treatment CT $X^-$,
pre-treatment tumor mask $S^-$, and clinical covariates $C$; $A\in\Aset$ is a
complete TACE regimen from a finite candidate library; $M\in\R^{5}$ is an
operational treatment-response descriptor; $X^+=(I^+,S^+)$ denotes the
registered follow-up CT and tumor-mask state; $T$ is event time;
$\widetilde T=\min(T,C^{\mathrm{cen}})$; and
$\Delta=\1\{T\le C^{\mathrm{cen}}\}$.  Importantly, $A$ is a \emph{joint
regimen}, not a set of independent Bernoulli drug labels.  The theory therefore
requires the generalized propensity
\begin{equation}
 e_a(h)=\Pp(A=a\mid H=h), \qquad a\in\Aset.
 \label{eq:propensity}
\end{equation}

In the current HCC-TACE-Seg pipeline, $S^+$ is unavailable as a manual label.
A segmenter trained within each outer fold on pre-treatment CT--mask pairs is
applied to follow-up CT, with a deterministic intensity-based fallback for
small predicted masks.  Let $n^-$ and $n^+$ be the pre/post mask voxel counts,
and let $z^-=(z^-_j)$ and $z^+=(z^+_j)$ be their masked HU values.  The extractor
implements
\begin{equation}
 M=\left(
 \frac{n^+-n^-}{\max(n^-,1)},
 \frac{1}{|z^+|}\sum_j\1\{z_j^+>150\},
 \frac{1}{|z^+|}\sum_j\1\{z_j^+<30\},
 \overline z^- - \overline z^+,
 \frac{1}{|z^+|}\sum_j\1\{30\le z_j^+\le150\}
 \right).
 \label{eq:operational-response}
\end{equation}
The components are relative volume change, lipiodol deposition, necrosis proxy,
enhancement reduction, and residual viable-tumor proxy.  The fixed HU cutoffs
make the extractor reproducible but protocol-dependent; mask perturbation is
therefore analyzed explicitly.  Thus $M$ is a pseudo-labeled, image-derived
response descriptor, not a separately timed biological mediator, a validated
radiomic biomarker, or evidence for a natural direct or indirect treatment
effect.

For a clinically fixed horizon $\tau>0$, define normalized restricted mean
survival utility
\begin{equation}
 U_\tau(a)
 =\frac{\min\{T(a),\tau\}}{\tau}
 =\frac{1}{\tau}\int_0^\tau \1\{T(a)>t\}\,dt
 \in[0,1].
 \label{eq:rmst}
\end{equation}
RMST is used because it has a direct time-based interpretation and does not
require proportional hazards \cite{royston2013rmst}.  The individual conditional
value and population value are
\begin{equation}
 V_\tau(a,h)=\E\{U_\tau(a)\mid H=h\},
 \qquad
 V_\tau(a)=\E_H\{V_\tau(a,H)\}.
 \label{eq:value}
\end{equation}

The mathematical analysis uses normalized utility in $[0,1]$.  Code and CSV
outputs store RMST in weeks, $R_\tau(a)=\tau U_\tau(a)$.  Dividing reported
values and errors by $\tau$ recovers the normalized scale required by the
bounded-loss and Hoeffding results; this rescaling does not change regimen
rankings.

\subsection{Structural equations and causal graph}

CaMeWM uses the recursive SCM
\begin{align}
 H &= f_H(U_H),
 & A &= f_A(H,U_A), \nonumber\\
 M &= f_M(H,A,U_M),
 & X^+ &= f_X(H,A,M,U_X), \label{eq:scm}\\
 T &= f_T(H,A,M,X^+,U_T). \nonumber
\end{align}
The corresponding DAG contains
\begin{equation*}
 H\to\{A,M,X^+,T\},\quad
 A\to\{M,X^+,T\},\quad
 M\to\{X^+,T\},\quad X^+\to T.
\end{equation*}
The direct edge $A\to T$ is retained to avoid forcing all regimen effects
through image-derived response descriptors.  Under intervention $\doop(A=a)$, the SCM
induces
\begin{equation}
 p(m,x,t\mid h,\doop(a))
 =p_M(m\mid h,a)\,p_X(x\mid h,a,m)\,p_T(t\mid h,a,m,x).
 \label{eq:causal-factorization}
\end{equation}
This response-factorized transition is CaMeWM's central modeling structure.
Because the descriptor and follow-up image are obtained at the same visit, the
ordering $M$ then $X^+$ is a hierarchical factorization of their joint
post-treatment law.  It must not be interpreted as observed biological time
precedence or as identification of path-specific mediation effects.

\section{Identification}

\subsection{Assumptions}

\begin{assumption}[Consistency, SUTVA, and no interference]
\label{ass:consistency}
If $A=a$, consistency requires
\begin{equation*}
 (M,X^+,T)=(M(a),X^+(a),T(a)).
\end{equation*}
One patient's treatment does not alter another patient's trajectory.
\end{assumption}

\begin{assumption}[Conditional treatment exchangeability]
\label{ass:exchangeability}
For every $a\in\Aset$,
\begin{equation}
 A\ind\{M(a),X^+(a),U_\tau(a)\}\mid H.
 \label{eq:exchangeability}
\end{equation}
Thus $H$ contains all common causes of regimen assignment and subsequent
response.  This is an untestable identifying assumption, not a consequence of
neural-network training.
\end{assumption}

\begin{assumption}[Treatment positivity]
\label{ass:positivity}
For histories in the target population and every regimen considered by the
policy, $e_a(h)\ge\eta>0$.
\end{assumption}

\begin{assumption}[Pre-treatment adjustment and post-treatment factorization]
\label{ass:temporal}
$H$ is measured before $A$; $M$ and $X^+$ are post-treatment variables; and no
descendant of $A$ is included in the propensity model.  The chain-rule ordering
$p_M(m\mid h,a)p_X(x\mid h,a,m)$ is an estimation factorization of the joint
post-treatment response and does not require $M$ to be measured before $X^+$.
\end{assumption}

\begin{assumption}[Independent censoring]
\label{ass:censoring}
Let $Z=(H,A,M,X^+)$.  Then
$C^{\mathrm{cen}}\ind(T,Z)$ and
$G(t)=\Pp(C^{\mathrm{cen}}\ge t)\ge g_{\min}>0$ for
$0\le t\le\tau$.
This stronger marginal condition matches the Kaplan--Meier censoring weights
used in the released evaluation code.  A conditional censoring model
$G(t\mid Z)$ is a valid extension but is not implemented here.
\end{assumption}

\subsection{Causal g-formula}

Define the conditional outcome regression
\begin{equation}
 Q_\tau(h,a,m,x)
 =\E(U_\tau\mid H=h,A=a,M=m,X^+=x).
 \label{eq:q-function}
\end{equation}

This $Q_\tau$ is the planning-time generative functional: it maps a supported
history and regimen through the response and follow-up-state factors.  It is
not the factual clinical outcome nuisance used to evaluate the frozen policy
in Section~\ref{sec:factual-aipw}.

\begin{theorem}[Identification by the response-factorized world-model g-formula]
\label{thm:identification}
Under Assumptions~\ref{ass:consistency}--\ref{ass:temporal}, for every supported
$(a,h)$,
\begin{equation}
 V_\tau(a,h)
 =\int\!\!\int
 Q_\tau(h,a,m,x)
 p_X(x\mid h,a,m)
 p_M(m\mid h,a)\,dx\,dm.
 \label{eq:g-formula}
\end{equation}
Consequently,
\begin{equation}
 \pi^*(h)\in\arg\max_{a\in\Asup(h)}V_\tau(a,h),
 \qquad
 \Asup(h)=\{a\in\Aset:e_a(h)\ge\eta\},
 \label{eq:oracle-policy}
\end{equation}
is optimal among regimens satisfying positivity.
\end{theorem}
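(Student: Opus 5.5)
The argument runs a standard chain of equalities from the interventional value to the observed-data functional, then reads off the policy statement. Fix a supported pair $(a,h)$ with $e_a(h)\ge\eta>0$ (Assumption~\ref{ass:positivity}).

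\textbf{Step 1: condition on treatment.} Starting from $V_\tau(a,h)=\E\{U_\tau(a)\mid H=h\}$, I would use Assumption~\ref{ass:exchangeability} to write
\[
\E\{U_\tau(a)\mid H=h\}=\E\{U_\tau(a)\mid H=h,A=a\}.
\]
Positivity guarantees that the conditioning event has positive probability, or positive density in the regular-conditional-distribution sense, so this conditional is well defined.

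\textbf{Step 2: pass to observed variables.} By consistency (Assumption~\ref{ass:consistency}), on $\{A=a\}$ we have $(M(a),X^+(a),U_\tau(a))=(M,X^+,U_\tau)$. Hence $V_\tau(a,h)=\E(U_\tau\mid H=h,A=a)$, which is a purely observational quantity.

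\textbf{Step 3: apply the tower property.} Conditioning successively on $M$ and then $X^+$ within the stratum $\{H=h,A=a\}$ gives
\[
\E(U_\tau\mid h,a)=\int\E\{\E(U_\tau\mid h,a,m,X^+)\mid h,a,m\}\,p_M(m\mid h,a)\,dm .
\]
The inner conditional expectation is $Q_\tau(h,a,m,x)$ from Eq.~\eqref{eq:q-function}, integrated against $p_X(x\mid h,a,m)$, and this yields Eq.~\eqref{eq:g-formula}.

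Two points about this step should be made explicit. First, the factorization $p(m,x\mid h,a)=p_M\,p_X$ is merely the chain rule, as stated in Assumption~\ref{ass:temporal}. No causal ordering of $M$ before $X^+$ is used, and exchangeability is needed only for the joint vector $(M(a),X^+(a),U_\tau(a))$. Second, the marginal $(M,X^+)$ law equals the interventional one by the same argument applied to indicator functions. If desired, I would note the equivalence with the SCM truncated factorization in Eq.~\eqref{eq:causal-factorization}.

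\textbf{Main obstacle: measure-theoretic care.} With continuous $H$ and $X^+$ (CT volumes), conditioning on null events must be handled through regular conditional distributions. The exchangeability step must then hold $P_H$-almost everywhere on the support set. I would state the identity for $P_H$-a.e.\ $h$ with $e_a(h)\ge\eta$, using $e_a(h)\ge\eta$ to ensure that $P(\cdot\mid H,A=a)$ is absolutely continuous relative to $P(\cdot\mid H)$. Boundedness of $U_\tau$ in $[0,1]$ ensures that all expectations exist and justifies Fubini in Step 3.

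\textbf{Step 4: optimality of $\pi^*$.} For each $h$, $\Asup(h)$ is a finite nonempty subset of $\Aset$, so a maximizer exists. Take any policy $\pi$ with $\pi(h)\in\Asup(h)$. Pointwise we have $V_\tau(\pi^*(h),h)\ge V_\tau(\pi(h),h)$. Integrating over $H$ gives $V_\tau(\pi^*)\ge V_\tau(\pi)$ for the policy value $\E_H V_\tau(\pi(H),H)$. This requires $\pi^*$ to be measurable, which can be arranged by choosing a measurable selection, for instance the argmax with ties broken by a fixed ordering of the finite set $\Aset$. Since each $V_\tau(a,\cdot)$ is identified by Steps 1--3, $\pi^*$ is identified on the support.
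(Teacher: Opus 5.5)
Your proposal is correct and follows the paper's proof essentially step for step: exchangeability and then consistency reduce $V_\tau(a,h)$ to $\E(U_\tau\mid H=h,A=a)$, the tower property with the chain-rule factorization $p(m,x\mid h,a)=p_M\,p_X$ gives the g-formula, and optimality of $\pi^*$ follows from the definition of the supported argmax. Your extra care goes beyond what the paper writes but does not change the argument. This includes regular conditional distributions, measurable tie-breaking, integrating the pointwise inequality over $H$, and the nonemptiness of $\Asup(h)$, which holds whenever $\eta\le 1/|\Aset|$ as in the paper's setting.
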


Theorem~\ref{thm:identification} is a back-door/g-computation result
\cite{pearl2009causality,hernan2020whatif}.  The response descriptor supplies a
clinically interpretable hierarchical factorization of the state transition;
it is not used to assert front-door or path-specific mediation identification.

\section{Implementation-Aligned Staged Estimation}

CaMeWM estimates the three g-formula factors and the treatment mechanism:
\begin{equation}
 \widehat P_M(m\mid h,a),\qquad
 \widehat P_X(x\mid h,a,m),\qquad
 \widehat Q_\tau(h,a,m,x),\qquad
 \widehat e_a(h).
 \label{eq:three-factors}
\end{equation}
The released implementation trains these modules sequentially within each
outer training fold.  It does not optimize a single end-to-end loss, an MMD
balance penalty, or a mediator--state consistency loss.

\paragraph{Auxiliary response-descriptor mechanism.}
Let $z_M$ denote the component-wise transformed and training-fold standardized
version of the five-dimensional descriptor $M$.  The 3D network $f_\phi$ receives
$(X^-,S^-,C,A)$ and minimizes
\begin{equation}
 \mathcal L_M(\phi)
 =\E\!\left[\operatorname{SmoothL1}
 \{f_\phi(X^-,S^-,C,A),z_M\}\right].
 \label{eq:mediator-loss}
\end{equation}
The checkpoint is selected by validation standardized MAE.  At inference,
$f_\phi$ is inverse-standardized and range-clamped.  The implemented descriptor
law is therefore the point mass
\begin{equation}
 \widehat P_M(\cdot\mid h,a)=\delta_{\widehat m(h,a)},
 \qquad \widehat m(h,a)=f_\phi(h,a),
 \label{eq:point-mediator}
\end{equation}
rather than a learned stochastic descriptor distribution.  This auxiliary
factor is not used to claim causal mediation, front-door identification, or a
path-specific treatment effect.

\paragraph{Deterministic anchor and residual conditional diffusion transition.}
CT intensities are windowed to $[-1,1]$.  Given $(H,A,M)$, the deterministic
anchor $B_\beta(H,A,M)$ predicts a coarse treatment-conditioned follow-up CT.
The DDPM models only the normalized residual $D=(I^+-B_\beta)/2$.  At diffusion
step $s$,
\begin{equation}
 D_s=\sqrt{\bar\alpha_s}D
     +\sqrt{1-\bar\alpha_s}\,\epsilon,
 \qquad \epsilon\sim\mathcal N(0,I),
\end{equation}
and the denoiser conditions on $(D_s,s,B_\beta,S^-,A,M)$.  If
$\widetilde I^+_\theta=B_\beta+2\widetilde D_\theta$ is the reconstructed
follow-up CT and $W$ is the confidence-weighted union of the baseline and
pseudo follow-up masks defined in the main paper, the implemented loss is
\begin{align}
 \mathcal L_X(\theta)
 &=\E\norm{\epsilon-\epsilon_\theta(D_s,s,B_\beta,S^-,A,M)}_2^2
   +\lambda_g\norm{\widetilde I^+_\theta-I^+}_1 \nonumber\\
 &\quad+\lambda_t
 \frac{\norm{W\odot(\widetilde I^+_\theta-I^+)}_1}
      {\norm{W}_1+10^{-6}}
 +\lambda_s\{1-\operatorname{SSIM}_3(\widetilde I^+_\theta,I^+)\},
 \label{eq:diffusion-loss}
\end{align}
with $(\lambda_g,\lambda_t,\lambda_s)=(0.5,2.0,0.25)$.  Residual changes are
restricted to a dilated pre-treatment tumor region during decoding.  To reduce
the factual-to-predicted descriptor shift, each training batch uses the
predicted descriptor with probability $0.5$ and the factual descriptor
otherwise; validation and planning use the predicted descriptor.  The diffusion
checkpoint is selected by validation tumor-region MAE.
During planning, the fold-trained segmenter is applied to each generated CT to
obtain the generated post-mask supplied to the survival network.

\paragraph{Joint-regimen propensity.}
For the two mutually exclusive curated regimens, a baseline 3D encoder and
clinical head estimate
\begin{equation}
 \widehat e_a(h)=\operatorname{softmax}_a r_\gamma(X^-,S^-,C),
 \qquad
 \mathcal L_e(\gamma)=-\E\log\widehat e_A(H).
 \label{eq:propensity-loss}
\end{equation}
No post-treatment image, mask, or response proxy enters this model.  The
checkpoint is selected by validation cross-entropy.  Overlap is assessed after
training using selected propensities, effective sample size, clipping rate, and
pre/post-weighting standardized mean differences \cite{austin2015iptw}.

\paragraph{Joint-IPTW response-Cox survival.}
The regularized response-Cox head conditions on
$Z=(H,A,M,X^+)$ through fold-local response features and is trained using the
clipped stabilized joint-regimen weight
\begin{equation}
 w_i=\frac{\operatorname{clip}\!\left\{
 \Pr(A=A_i)/\widehat e_{A_i}(H_i);\,1/10,10\right\}}
 {\frac{1}{n}\sum_{j=1}^n\operatorname{clip}\!\left\{
 \Pr(A=A_j)/\widehat e_{A_j}(H_j);\,1/10,10\right\}}
 \label{eq:implemented-weight}
\end{equation}
and the weighted Cox partial negative log-likelihood
\begin{equation}
 \mathcal L_T(\omega)
 =-\sum_{i:\Delta_i=1}w_i
 \left[r_\omega(Z_i)-
 \log\left\{\sum_{j:\widetilde T_j\ge\widetilde T_i}
 w_j\exp r_\omega(Z_j)\right\}\right]
 +\frac{\lambda}{2}\lVert\omega\rVert_2^2.
 \label{eq:survival-loss}
\end{equation}
At each outer fold, a weighted Breslow baseline hazard is fitted on the
training partition.  The resulting survival curve and RMST are
\begin{equation}
 \widehat S_\omega(t\mid Z)
 =\exp\left\{-\widehat H_0(t)\exp r_\omega(Z)\right\},\qquad
 \widehat R_\tau(Z)
 =\int_0^\tau \widehat S_\omega(t\mid Z)\,dt,
 \quad \widehat Q_\tau=\widehat R_\tau/\tau.
 \label{eq:cox-rmst}
\end{equation}
The primary checkpoint is selected by validation C-index, with validation loss
as the fallback when C-index is undefined.  Consequently, the released model is
a staged estimator with a weighted outcome loss, not the previously described
joint MMD-regularized objective.

Training uses factual $M_i$ and $X_i^+$, whereas planning evaluates
$\widehat Q_\tau\{h,a,\widehat m(h,a),\widehat X^+(a)\}$.  This substitution is a
generated-state covariate shift.  The implementation deliberately does not
augment survival training with model-generated images because doing so would
couple the outcome labels to the current generator and could create
self-confirming errors.  Consequently, factual OOF discrimination and
calibration evaluate the outcome module on observed states but do not certify
calibration on unobserved counterfactual states.  This gap is part of the model
error $\varepsilon_Q+\varepsilon_X$, not removed by propensity adjustment.

\paragraph{Notation reference.}
Table~\ref{tab:notation} collects the principal symbols used in the main paper
and this supplement.  Hats denote fitted quantities and $a$ denotes a candidate
regimen, whereas $A$ denotes the observed regimen.

\begin{table}[htbp]
\centering
\small
\caption{Principal notation used by CaMeWM.}
\label{tab:notation}
\begin{tabular}{p{0.17\linewidth}p{0.72\linewidth}}
\toprule
Symbol & Meaning \\
\midrule
$H=(I^-,S^-,C)$ & Pre-treatment history: baseline CT, baseline lesion mask, and clinical covariates. \\
$A$, $a$ & Observed joint regimen and a candidate intervention, respectively. \\
$M$ & Treatment-conditioned response descriptor predicted from $(H,a)$. \\
$X^+$, $X^+(a)$ & Observed registered follow-up CT and its potential follow-up state under $a$. \\
$e_a(H)$, $\mathcal A_\eta(H)$ & Joint-regimen propensity and the support-restricted candidate set satisfying $e_a(H)\geq\eta$. \\
$P_M$, $P_X$ & Response-descriptor and follow-up-state transition distributions in the g-formula. \\
$Y_\tau$, $Q_\tau$, $V_\tau$ & Restricted survival outcome, conditional RMST utility, and intervention value through horizon $\tau$. \\
$\pi$, $K$ & Supported treatment policy and number of sampled follow-up worlds per regimen. \\
\bottomrule
\end{tabular}
\end{table}

\section{Factor-Wise Error Propagation and Planning Regret}

For fixed $(h,a)$, define the true and estimated transition measures
\begin{align}
 P_{h,a}(dm,dx)
 &=P_M(dm\mid h,a)P_X(dx\mid h,a,m),\\
 \widehat P_{h,a}(dm,dx)
 &=\widehat P_M(dm\mid h,a)\widehat P_X(dx\mid h,a,m).
\end{align}
We use
$\TV(P,Q)=\sup_B|P(B)-Q(B)|$.

\begin{theorem}[Factor-wise causal-value error]
\label{thm:value-error}
Assume $Q_\tau,\widehat Q_\tau\in[0,1]$ and define
\begin{align}
 \varepsilon_Q(h,a)
 &=\sup_{m,x}|\widehat Q_\tau(h,a,m,x)-Q_\tau(h,a,m,x)|,\\
 \varepsilon_M(h,a)
 &=\TV\{\widehat P_M(\cdot\mid h,a),P_M(\cdot\mid h,a)\},\\
 \varepsilon_X(h,a)
 &=\sup_m\TV\{\widehat P_X(\cdot\mid h,a,m),
                       P_X(\cdot\mid h,a,m)\}.
\end{align}
Let $\widehat V_\tau(a,h)$ be Eq.~\eqref{eq:g-formula} with all three factors
replaced by their estimates.  Then
\begin{equation}
 |\widehat V_\tau(a,h)-V_\tau(a,h)|
 \le \varepsilon_Q(h,a)+\varepsilon_M(h,a)+\varepsilon_X(h,a).
 \label{eq:value-bound}
\end{equation}
\end{theorem}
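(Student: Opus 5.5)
The plan is a three-term telescoping decomposition of the difference between the estimated and true g-formula values. Each term changes exactly one factor, and each is bounded using either the sup-norm gap of the outcome regression or the total-variation gap of one transition factor. Write
\begin{equation*}
 V_\tau(a,h)=\int Q_\tau\,dP_{h,a},\qquad
 \widehat V_\tau(a,h)=\int \widehat Q_\tau\,d\widehat P_{h,a},
\end{equation*}
using the product measures $P_{h,a}$ and $\widehat P_{h,a}$ defined just before the theorem. Introduce the hybrid measure
\begin{equation*}
 \widetilde P_{h,a}(dm,dx)=\widehat P_M(dm\mid h,a)\,P_X(dx\mid h,a,m),
\end{equation*}
which uses the estimated descriptor law and the true image transition. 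We then split
\begin{equation*}
 \widehat V_\tau-V_\tau
 =\int(\widehat Q_\tau-Q_\tau)\,d\widehat P_{h,a}
 +\int Q_\tau\,d(\widehat P_{h,a}-\widetilde P_{h,a})
 +\int Q_\tau\,d(\widetilde P_{h,a}-P_{h,a}).
\end{equation*}

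\textbf{First term.} Since $\widehat P_{h,a}$ is a probability measure, the first integral is bounded in absolute value by $\sup_{m,x}|\widehat Q_\tau-Q_\tau|=\varepsilon_Q(h,a)$.

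\textbf{Second term.} The two measures share the outer factor $\widehat P_M$. Integrating first in $x$, the inner quantity for each fixed $m$ is $\int Q_\tau(h,a,m,x)\{\widehat P_X-P_X\}(dx\mid h,a,m)$. Here I use the standard fact that for a function $f$ with values in $[0,1]$ and probability measures $P,Q$, $|\int f\,d(P-Q)|\le\TV(P,Q)$ under the sup-over-events convention. One proof writes $\int f\,dP=\int_0^1 P(f>t)\,dt$ via the layer-cake representation, so the difference is an average of event-probability differences, each at most $\TV$. This bounds the inner quantity by $\sup_m\TV=\varepsilon_X(h,a)$. Averaging over $\widehat P_M$ then gives the bound $\varepsilon_X$.

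\textbf{Third term.} The two measures share the kernel $P_X$. Define $g(m)=\int Q_\tau(h,a,m,x)\,P_X(dx\mid h,a,m)\in[0,1]$, which is measurable in $m$ because $P_X$ is a kernel. The third term equals $\int g\,d(\widehat P_M-P_M)$, and the same layer-cake lemma bounds it by $\varepsilon_M(h,a)$. Summing the three bounds yields Eq.~\eqref{eq:value-bound}.

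\textbf{Main care point.} The only delicate step is the total-variation lemma with the paper's normalization. With the convention $\sup_B|P(B)-Q(B)|$ and $f\in[0,1]$, no factor of $2$ appears; a function with values in $[-1,1]$ would incur one. This is why the theorem requires $Q_\tau,\widehat Q_\tau\in[0,1]$, and why I apply the lemma to $Q_\tau$ and $g$ rather than to the differences.

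A few technical points need handling along the way. We need measurability of the kernels and that $\widehat P_M$ may be a point mass; the argument needs no densities, so the $dx\,dm$ notation of Eq.~\eqref{eq:g-formula} should be read as integration against measures. Finally, since the bound holds pointwise in $(h,a)$, taking suprema gives the uniform statement used in Theorem~\ref{thm:main-identification}.
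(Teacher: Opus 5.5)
Your proof is correct and is essentially the paper's argument: the paper adds and subtracts $\int Q_\tau\,d\widehat P$ and then $\int \widetilde q\,d\widehat P_M$ with $\widetilde q(m)=\int Q_\tau(m,x)\,dP_X(x\mid m)$, which is exactly your hybrid measure $\widehat P_M P_X$. It then bounds the three pieces by $\varepsilon_Q$, $\varepsilon_X$, $\varepsilon_M$ via the variational characterization of total variation for $[0,1]$-valued functions, a step the paper only cites and your layer-cake argument justifies (only $Q_\tau\in[0,1]$, not $\widehat Q_\tau\in[0,1]$, is actually used).
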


\begin{corollary}[Plug-in policy regret]
\label{cor:plugin-regret}
Let
$\widehat\pi(h)\in\arg\max_{a\in\Asup(h)}\widehat V_\tau(a,h)$ and
\begin{equation*}
 \varepsilon(h)=\max_{a\in\Asup(h)}
 \{\varepsilon_Q(h,a)+\varepsilon_M(h,a)+\varepsilon_X(h,a)\}.
\end{equation*}
Then
\begin{equation}
 V_\tau(\pi^*(h),h)-V_\tau(\widehat\pi(h),h)
 \le 2\varepsilon(h).
 \label{eq:plugin-regret}
\end{equation}
\end{corollary}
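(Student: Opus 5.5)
The plan is the standard ``insert the estimated maximizer'' argument, reducing everything to the pointwise value bound of Theorem~\ref{thm:value-error}. Fix a history $h$ and write $a^*=\pi^*(h)$ and $\widehat a=\widehat\pi(h)$. Both actions lie in the same support set $\Asup(h)$: $a^*$ by the definition in Eq.~\eqref{eq:oracle-policy}, and $\widehat a$ by the definition of the plug-in policy. Theorem~\ref{thm:value-error} therefore applies at both $(h,a^*)$ and $(h,\widehat a)$, and each of those errors is at most $\varepsilon(h)$, since $\varepsilon(h)$ is a maximum over exactly this set.

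Decompose the regret into three differences:
\begin{equation*}
 V_\tau(a^*,h)-V_\tau(\widehat a,h)
 =\{V_\tau(a^*,h)-\widehat V_\tau(a^*,h)\}
 +\{\widehat V_\tau(a^*,h)-\widehat V_\tau(\widehat a,h)\}
 +\{\widehat V_\tau(\widehat a,h)-V_\tau(\widehat a,h)\}.
\end{equation*}
\begin{itemize}
 \item The middle term is nonpositive. This holds because $\widehat a$ maximizes $\widehat V_\tau(\cdot,h)$ over $\Asup(h)$ and $a^*\in\Asup(h)$.
 \item The first and third terms are each bounded in absolute value by the right side of Eq.~\eqref{eq:value-bound}, evaluated at $a^*$ and at $\widehat a$ respectively. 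Each of these is therefore at most $\varepsilon(h)$.
\end{itemize}
Summing the three terms gives the bound $2\varepsilon(h)$.

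The only point requiring care is that the plug-in maximizer and the oracle maximizer must range over the same support set. If the plan used the estimated support $\{a:\widehat e_a(h)\ge\eta\}$ rather than $\Asup(h)$, the argument could break, because $a^*$ might fall outside the feasible set of $\widehat\pi$. The corollary avoids this by defining $\widehat\pi$ on $\Asup(h)$ itself, so no extra step is needed.

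Two minor checks remain. First, the maxima should be attained, which is immediate because $\Aset$ is finite. Second, if $\Asup(h)$ is empty the statement is vacuous.
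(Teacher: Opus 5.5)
Your proposal is correct and matches the paper's proof: the same three-term decomposition around $\widehat V_\tau$, with the middle term nonpositive because $\widehat a$ maximizes $\widehat V_\tau(\cdot,h)$ over the same set $\Asup(h)$ that contains $a^*$, and the outer two terms each bounded by $\varepsilon(h)$ via Theorem~\ref{thm:value-error}. Your additional remarks on using the true rather than estimated support set, finiteness of $\Aset$, and the empty-support case are sound.
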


Equation~\eqref{eq:plugin-regret} is the reason the paper must report response-descriptor,
generation, and survival errors separately.  A visually realistic transition
does not imply small planning regret when either descriptor or outcome error is
large.

\begin{remark}[Scope of the factor bound for the released model]
Theorem~\ref{thm:value-error} is valid for the generic factorized law.  In the
real-data implementation, Eq.~\eqref{eq:point-mediator} replaces the descriptor
law by a point mass and no observable counterfactual distribution is available
to estimate the three total-variation errors.  The theorem is therefore tested
with known ground truth in the bounded semi-synthetic experiment; real-data
response-descriptor MAE, generation error, and survival error are diagnostics rather than
empirical estimates of $\varepsilon_M$, $\varepsilon_X$, and
$\varepsilon_Q$.
\end{remark}

Accordingly, the bound is a structural decomposition rather than a computable
finite-sample certificate for HCC-TACE-Seg.  Standardized response MAE,
tumor-region HU-MAE and change-map SSIM, and survival IBS/calibration are
observable module diagnostics only; none is asserted to equal a uniform
total-variation term.  Exact factor errors and bound coverage are reported only
in the bounded semi-synthetic SCM where both factual and counterfactual laws are
known.

\subsection{Monte Carlo world-model integration}

In the released planner, Eq.~\eqref{eq:g-formula} is approximated with one
deterministic $\widehat m(h,a)$ and $K$ conditionally independent diffusion
samples $\widehat X_k^+$.  Define
\begin{equation}
 \widehat V_{\tau,K}(a,h)
 =\frac1K\sum_{k=1}^K
 \widehat Q_\tau(h,a,\widehat m(h,a),\widehat X_k^+).
 \label{eq:mc-value}
\end{equation}
The AAAI configuration uses $K=8$.  A future stochastic descriptor model could
sample $(\widehat M_k,\widehat X_k^+)$ jointly without changing the argument.

\begin{lemma}[Finite-sample Monte Carlo error]
\label{lem:mc}
Conditionally on the fitted world model,
\begin{equation}
 \Pp\left(
 |\widehat V_{\tau,K}(a,h)-\widehat V_\tau(a,h)|\ge r
 \right)\le 2e^{-2Kr^2}.
 \label{eq:mc-hoeffding}
\end{equation}
If $J=|\Asup(h)|$, then with probability at least $1-\delta$ the bound holds
simultaneously for all supported actions with
\begin{equation}
 r_K(\delta,J)=\sqrt{\frac{\log(2J/\delta)}{2K}}.
 \label{eq:mc-radius}
\end{equation}
\end{lemma}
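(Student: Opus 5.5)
Conditioning on the fitted world model, that is, on all fold-trained parameters $(\widehat\phi,\widehat\beta,\widehat\theta,\widehat\omega,\widehat\psi)$ and hence on the deterministic descriptor $\widehat m(h,a)$, I will treat the $K$ diffusion samples $\widehat X_1^+,\ldots,\widehat X_K^+$ as i.i.d.\ draws from $\widehat P_X(\cdot\mid h,a,\widehat m(h,a))$. This uses the conditional independence of the sampling noise stated before Eq.~\eqref{eq:mc-value}. The segmenter $g_\psi$ is a fixed deterministic map, so each summand
\begin{equation*}
 Z_k=\widehat Q_\tau\{h,a,\widehat m(h,a),\widehat X_k^+\}
\end{equation*}
is a measurable function of $\widehat X_k^+$ alone. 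The $Z_k$ are therefore i.i.d. Because $\widehat Q_\tau=\widehat R_\tau/\tau$ with $0\le\widehat S_\omega\le1$, each $Z_k$ lies in $[0,1]$.

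The first step identifies the common mean. When $\widehat P_M$ is the point mass of Eq.~\eqref{eq:point-mediator}, the plug-in g-formula collapses to
\begin{equation*}
 \widehat V_\tau(a,h)=\int\widehat Q_\tau\{h,a,\widehat m(h,a),x\}\,\widehat P_X\{dx\mid h,a,\widehat m(h,a)\}=\E[Z_k\mid\text{fitted model}].
\end{equation*}
Hoeffding's inequality for $K$ independent variables in $[0,1]$ then gives Eq.~\eqref{eq:mc-hoeffding} directly. If a stochastic descriptor is sampled jointly, the same argument applies with the pairs $(\widehat M_k,\widehat X_k^+)$ in place of $\widehat X_k^+$.

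For the simultaneous statement over actions, I will apply the single-action bound to each $a\in\Asup(h)$ and take a union bound over the $J$ events. This gives a total failure probability of at most $2Je^{-2Kr^2}$. Setting this equal to $\delta$ and solving for $r$ yields $r_K(\delta,J)=\sqrt{\log(2J/\delta)/(2K)}$. The union bound needs no independence across actions, so it does not matter whether the rollouts for different actions share randomness.

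The only delicate point is justifying the i.i.d.\ and boundedness premises for the implemented pipeline rather than for the idealized model. First, boundedness: I will check that the Breslow survival curve is monotone in $[0,1]$, so $\widehat R_\tau\in[0,\tau]$. Second, the $16$-voxel fallback and the residual gating: both are deterministic, so they only change the measurable map from sample to score. Third, fresh noise: each world's reverse-diffusion chain must use fresh noise, which holds by construction. Finally, I will note that on the real-data scale in weeks the radius scales to $\tau\,r_K$.
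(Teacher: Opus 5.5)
Your proposal is correct and follows the paper's proof. Conditionally on the fitted model, the $K$ summands $\widehat Q_\tau\{h,a,\widehat m(h,a),\widehat X_k^+\}$ are independent, $[0,1]$-valued, and have mean $\widehat V_\tau(a,h)$; Hoeffding then gives the single-action bound, and a union bound over the $J$ supported actions, solved from $2Je^{-2Kr^2}=\delta$, gives $r_K(\delta,J)$. Your additional checks (the point-mass collapse of the plug-in g-formula identifying the mean, boundedness of the Breslow RMST, and the remark that the union bound tolerates shared noise across actions) make explicit premises that the paper states in one line.
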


The lemma is stated for normalized RMST.  For the day-scale values written by
the code, the deviation radius is $\tau r_K(\delta,J)$.

\begin{corollary}[End-to-end sampled-policy regret]
\label{cor:sampled-regret}
Let
$\widehat\pi_K(h)\in\arg\max_{a\in\Asup(h)}\widehat V_{\tau,K}(a,h)$.
With probability at least $1-\delta$ over world-model sampling,
\begin{equation}
 V_\tau(\pi^*(h),h)-V_\tau(\widehat\pi_K(h),h)
 \le 2\{\varepsilon(h)+r_K(\delta,J)\}.
 \label{eq:sampled-regret}
\end{equation}
\end{corollary}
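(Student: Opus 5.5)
The plan is a direct triangle-inequality argument on the event where both the Monte Carlo deviation and the factor-wise error are controlled. Fix $h$ and write $J=|\Asup(h)|$, $r=r_K(\delta,J)$.

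\textbf{Step 1: a high-probability event.} By Lemma~\ref{lem:mc} applied to each supported action, together with a union bound over the $J$ actions, the event
\begin{equation*}
 \mathcal E=\Bigl\{\max_{a\in\Asup(h)}|\widehat V_{\tau,K}(a,h)-\widehat V_\tau(a,h)|\le r\Bigr\}
\end{equation*}
has probability at least $1-\delta$, conditionally on the fitted world model. This is exactly the simultaneous statement in Eq.~\eqref{eq:mc-radius}. The probability is taken only over the $K$ sampled worlds; the quantities $\varepsilon_Q,\varepsilon_M,\varepsilon_X$ are deterministic given the fit.

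\textbf{Step 2: uniform control of the sampled value.} Theorem~\ref{thm:value-error} gives $|\widehat V_\tau(a,h)-V_\tau(a,h)|\le\varepsilon(h)$ for every $a\in\Asup(h)$, by the definition of $\varepsilon(h)$ as a maximum over supported actions. Combining this with $\mathcal E$ through the triangle inequality yields, on $\mathcal E$,
\begin{equation*}
 |\widehat V_{\tau,K}(a,h)-V_\tau(a,h)|\le \varepsilon(h)+r \quad\text{for all } a\in\Asup(h).
\end{equation*}

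\textbf{Step 3: the standard regret sandwich.} Theorem~\ref{thm:identification} places $\pi^*(h)$ in $\Asup(h)$, and $\widehat\pi_K(h)\in\Asup(h)$ by construction. We then decompose
\begin{align*}
 V_\tau(\pi^*,h)-V_\tau(\widehat\pi_K,h)
 ={}&\{V_\tau(\pi^*,h)-\widehat V_{\tau,K}(\pi^*,h)\}
 +\{\widehat V_{\tau,K}(\pi^*,h)-\widehat V_{\tau,K}(\widehat\pi_K,h)\}\\
 &+\{\widehat V_{\tau,K}(\widehat\pi_K,h)-V_\tau(\widehat\pi_K,h)\}.
\end{align*}
The middle term is nonpositive because $\widehat\pi_K(h)$ maximizes $\widehat V_{\tau,K}(\cdot,h)$ over $\Asup(h)$. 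Each outer term is at most $\varepsilon(h)+r$ by Step 2. Summing gives Eq.~\eqref{eq:sampled-regret} on $\mathcal E$, and hence with probability at least $1-\delta$. This is the same argument as in Corollary~\ref{cor:plugin-regret}, with the sampled value substituted for the plug-in value.

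\textbf{Main obstacle.} The delicate point is Step 1 rather than the algebra. Lemma~\ref{lem:mc} requires the summands $\widehat Q_\tau(h,a,\widehat m,\widehat X_k^+)$ to be independent and $[0,1]$-valued given the fitted model, so we must invoke the normalized scale and the boundedness of $\widehat Q_\tau$. There is a further subtlety. The $2\varepsilon(h)$ bound from the factor theorem is deterministic, but it would break if $\widehat\pi_K$ were chosen using an action set that itself depends on the samples. Here $\Asup(h)$ is defined through the propensity alone and is fixed before sampling. This is what makes the union bound over a fixed set of $J$ actions valid, and we would state that explicitly.
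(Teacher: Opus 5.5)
Your proposal is correct and matches the paper's proof: the paper works on the simultaneous Monte Carlo event from Lemma~\ref{lem:mc}, combines it with Theorem~\ref{thm:value-error} via the triangle inequality to get the uniform error $r_K+\varepsilon(h)$ over $\Asup(h)$, and then reruns the three-term argmax decomposition of Corollary~\ref{cor:plugin-regret}. Your two remarks, that the summands must be $[0,1]$-valued on the normalized scale and that the support set is fixed before sampling, are sound points the paper leaves implicit.
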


\section{Operational Response Semantics and Limitations}

The diffusion model is conditioned on the predicted response descriptor, but the
released code does not train a separate extractor that maps a generated CT back
to $M$, and it does not optimize a generated-image/descriptor consistency loss.
Consequently, descriptor prediction accuracy and image-generation accuracy are
reported separately.  Conditioning alone does not establish that generated CT
semantics recover the requested proxy value.  Post-mask perturbation sensitivity
and factual-versus-predicted-descriptor generation evaluations quantify part of
this risk, but no Wasserstein or clinical-semantic certificate is claimed.

\section{Cross-Fitted Factual-Only AIPW Policy Evaluation}
\label{sec:factual-aipw}

\subsection{Censoring-adjusted outcome}

Under Assumption~\ref{ass:censoring}, define
\begin{equation}
 \widetilde U_\tau
 =\frac1\tau\int_0^\tau
 \frac{\1\{\widetilde T\ge t\}}{G(t)}\,dt.
 \label{eq:ipcw-rmst}
\end{equation}
Then
$\E(\widetilde U_\tau\mid Z)=\E(U_\tau\mid Z)$; the proof is given in
Appendix~\ref{app:censoring}.  If no observations are censored before $\tau$,
$G\equiv1$ and Eq.~\eqref{eq:ipcw-rmst} reduces to observed normalized RMST.
The implementation substitutes the empirical marginal Kaplan--Meier estimate
$\widehat G$ and stores the day-scale pseudo-outcome
$\widetilde R_\tau=\tau\widetilde U_\tau$.

\subsection{Factual clinical outcome nuisance and AIPW estimator}

For a deterministic policy $\pi$ frozen before evaluation, define the
factual-policy target
\begin{equation}
 V^{\mathrm{fact}}_\tau(\pi)=\E_H\{\mu^{\mathrm{fact}}_{\pi(H)}(H)\},
 \qquad
 \mu^{\mathrm{fact}}_a(h)=\E(\widetilde R_\tau\mid H=h,A=a).
 \label{eq:factual-target}
\end{equation}
Here $\widetilde R_\tau=\tau\widetilde U_\tau$ is the IPCW RMST pseudo-outcome
in weeks.  In contrast to the planning-time $Q_\tau$, the factual nuisance is
fit from pre-treatment clinical history, the factual regimen, and observed
survival only.  The released implementation uses the prespecified clinical
subset $C\subset H$ and writes this restricted model as
$\widehat\mu^{\mathrm{fact}}_a(C)$; it never receives a generated world,
response descriptor, pseudo-mask, post-treatment CT, or other post-treatment
variable.  The outcome-regression correctness arm below therefore requires
that $C$ be sufficient for $\mu^{\mathrm{fact}}_a(H)$; the propensity-correct
arm does not rely on that restriction.

Let $\widehat\mu^{\mathrm{fact},(-k)}$ and $\widehat e^{(-k)}$ be nuisance
estimates trained without fold $k$, and let $k(i)$ be the fold containing
patient $i$.  For a policy learned without patient $i$, define
\begin{align}
 \widehat\psi_i(\pi)
 &=\widehat\mu^{\mathrm{fact},(-k(i))}_{\pi(H_i)}(C_i) \nonumber\\
 &\quad+
 \frac{\1\{A_i=\pi(H_i)\}}
 {\widehat e_{A_i}^{(-k(i))}(H_i)}
  \left\{\widetilde R_{\tau,i}
  -\widehat\mu^{\mathrm{fact},(-k(i))}_{A_i}(C_i)\right\},
 \label{eq:aipw-score}\\
 \widehat V_{\mathrm{AIPW}}(\pi)
 &=\frac1n\sum_{i=1}^n\widehat\psi_i(\pi).
 \label{eq:aipw-estimator}
\end{align}

\begin{theorem}[Cross-fitted factual-only AIPW]
\label{thm:factual-aipw}
Assume \ref{ass:consistency}--\ref{ass:censoring}, i.i.d. patients, the true
censoring survival $G$, a policy $\pi$ fixed conditional on the training folds,
and uniformly bounded inverse estimated propensities.  Let $e$ and $\mu$ denote
the true treatment and factual outcome nuisances, respectively.

\begin{enumerate}
 \item The population expectation of the score in Eq.~\eqref{eq:aipw-score}
 equals $V^{\mathrm{fact}}_\tau(\pi)$ if either $\widehat e=e$ almost surely
 or $\widehat\mu^{\mathrm{fact}}=\mu^{\mathrm{fact}}$ almost surely.  For
 the latter condition, the implemented clinical subset $C$ must be sufficient
 for the factual conditional outcome regression.
 \item If the cross-fitted nuisance estimates satisfy
 \begin{equation}
  \norm{\widehat e-e}_2\,
  \norm{\widehat\mu-\mu}_2=o_p(n^{-1/2}),
  \qquad
  \norm{\widehat e-e}_2+\norm{\widehat\mu-\mu}_2=o_p(1),
  \label{eq:product-rate}
 \end{equation}
 then
 \begin{equation}
  \sqrt n\{\widehat V_{\mathrm{AIPW}}(\pi)-V^{\mathrm{fact}}_\tau(\pi)\}
  \rightsquigarrow \mathcal N(0,\sigma_\pi^2),
  \label{eq:aipw-clt}
 \end{equation}
 where
 \begin{equation}
  \sigma_\pi^2
  =\operatorname{Var}\left[
    \mu^{\mathrm{fact}}_{\pi(H)}(H)
    +\frac{\1\{A=\pi(H)\}}{e_A(H)}
     \{\widetilde R_\tau-\mu^{\mathrm{fact}}_A(H)\}
  \right].
 \end{equation}
\end{enumerate}
\end{theorem}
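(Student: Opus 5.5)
The plan follows the standard cross-fitted doubly robust argument. Part~1 reduces to an exact bias identity for the population score. Part~2 adds a three-term decomposition of $\widehat V_{\mathrm{AIPW}}(\pi)-V^{\mathrm{fact}}_\tau(\pi)$ around the oracle score. Throughout we condition on the training folds, so that within fold $k$ the nuisances $\widehat e^{(-k)},\widehat\mu^{(-k)}$ and the policy $\pi$ are fixed functions, and the held-out patients are i.i.d.\ draws independent of them.

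\textbf{Step 1 (bias identity).} Write $\mu_a(h)=\mu^{\mathrm{fact}}_a(h)=\E(\widetilde R_\tau\mid H=h,A=a)$. By the result $\E(\widetilde U_\tau\mid Z)=\E(U_\tau\mid Z)$ under Assumption~\ref{ass:censoring} (Appendix~\ref{app:censoring}), used with the true $G$, this is a well-defined regression. For fixed $(\widehat e,\widehat\mu,\pi)$ we condition on $H$ and sum over $A$:
\begin{equation*}
\E\{\widehat\psi(\pi)\mid H\}=\widehat\mu_{\pi(H)}(H)+\frac{e_{\pi(H)}(H)}{\widehat e_{\pi(H)}(H)}\bigl\{\mu_{\pi(H)}(H)-\widehat\mu_{\pi(H)}(H)\bigr\},
\end{equation*}
so that
\begin{equation*}
\E\{\widehat\psi(\pi)\mid H\}-\mu_{\pi(H)}(H)=\Bigl(\frac{e_{\pi(H)}}{\widehat e_{\pi(H)}}-1\Bigr)(H)\,\bigl(\mu_{\pi(H)}-\widehat\mu_{\pi(H)}\bigr)(H).
\end{equation*}
The product vanishes if either $\widehat e=e$ or $\widehat\mu=\mu$ almost surely, and taking $\E_H$ gives part~1. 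The implemented regression uses only the clinical subset $C$, so the condition $\widehat\mu=\mu$ must be read as $\widehat\mu_a(C)=\mu_a(H)$; this is exactly where sufficiency of $C$ enters. Positivity (Assumption~\ref{ass:positivity}) and the bounded inverse estimated propensities keep every term integrable.

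\textbf{Step 2 (decomposition for part~2).} Let $\psi^*$ be the score with true nuisances, and let $P_{n,k}$ be the empirical measure on fold $k$. Then
\begin{equation*}
\widehat V_{\mathrm{AIPW}}-V^{\mathrm{fact}}_\tau=(P_n-P)\psi^*+\sum_k\frac{n_k}{n}(P_{n,k}-P)(\widehat\psi^{(-k)}-\psi^*)+\sum_k\frac{n_k}{n}P(\widehat\psi^{(-k)}-\psi^*).
\end{equation*}
The first term gives the CLT with variance $\sigma^2_\pi$, since $\psi^*$ is bounded: $\widetilde R_\tau\le\tau/g_{\min}$ and $e\ge\eta$. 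For the second term, conditional on the training folds, the summands are i.i.d.\ with mean zero. Chebyshev's inequality then bounds the term by $O_p(n^{-1/2}\|\widehat\psi^{(-k)}-\psi^*\|_{2})$. The $L_2$ norm is controlled by a constant times $\|\widehat e-e\|_2+\|\widehat\mu-\mu\|_2$, using boundedness of the outcome, of $\widehat\mu$ and of $1/\widehat e$; by the second condition in Eq.~\eqref{eq:product-rate} this makes the term $o_p(n^{-1/2})$. For the third term, Step~1 shows $P(\widehat\psi^{(-k)}-\psi^*)$ equals the expectation of the product remainder. Cauchy--Schwarz together with $|e/\widehat e-1|\le|\widehat e-e|/\widehat e$ bounds it by a constant times $\|\widehat e-e\|_2\|\widehat\mu-\mu\|_2=o_p(n^{-1/2})$. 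Slutsky's theorem then gives Eq.~\eqref{eq:aipw-clt}.

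\textbf{Main obstacle.} The delicate point is the role of $\pi$. It is fixed only conditionally on the training folds, so it can differ across folds, and the target $V^{\mathrm{fact}}_\tau(\pi)$ and the variance $\sigma^2_\pi$ are then fold-dependent. The first option is to assume, as the statement suggests, a single frozen $\pi$, or one converging to a fixed limit, so that $\psi^*$ does not depend on $k$. Failing that, the fallback is to state the result conditionally and apply the Lindeberg CLT to a triangular array.

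A smaller, related caveat concerns censoring: the implementation uses the Kaplan--Meier $\widehat G$, not the true $G$. The statement assumes the true $G$, so I would simply note that Kaplan--Meier estimation error would contribute an additional, non-negligible influence-function term if it were not assumed away.
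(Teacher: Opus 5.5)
Your proposal is correct and follows essentially the same route as the paper. Part~1 uses the same conditional-on-$H$ bias identity with a product remainder, and part~2 uses the same three-term decomposition (oracle CLT, cross-fitted empirical-process term via $L_2$ consistency, Cauchy--Schwarz on the remainder) closed by Slutsky, while your caveats about fold-dependent policies and Kaplan--Meier estimation of $G$ go beyond the paper's argument, which treats $\pi$ as fixed and $G$ as known, as the hypotheses allow.
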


Theorem~\ref{thm:factual-aipw} is an application of augmented
inverse-propensity estimation and cross-fitting
\cite{bang2005dr,chernozhukov2018dml}.  DeepACE similarly uses targeting to
obtain doubly robust causal estimation for patient trajectories
\cite{frauen2023deepace}.  CaMeWM's paper-specific contribution is different:
the world model constructs the frozen, support-restricted policy, while the
factual clinical nuisance makes the real-data value estimator auditable without
reusing generated states.

For the ideal score with known $G$, an asymptotic standard error is obtained
from the cross-fitted scores:
\begin{equation}
 \widehat\sigma_\pi^2
 =\frac1n\sum_{i=1}^n
 \{\widehat\psi_i(\pi)-\widehat V_{\mathrm{AIPW}}(\pi)\}^2,
 \qquad
 \widehat V_{\mathrm{AIPW}}(\pi)
 \pm z_{1-\alpha/2}\frac{\widehat\sigma_\pi}{\sqrt n}.
 \label{eq:aipw-ci}
\end{equation}

The real-data implementation differs from this ideal estimator in two explicit
ways.  First, it truncates the observed-action denominator at
$\epsilon_e=0.01$:
\begin{equation}
 \widehat e^{\mathrm{tr}}_a(h)=\max\{\widehat e_a(h),0.01\}.
 \label{eq:aipw-truncation}
\end{equation}
Second, it estimates $G$ by a marginal Kaplan--Meier curve.  Outer-fold
cross-validation makes the propensity, factual clinical outcome predictions,
and patient-specific policy out of fold, but the pooled censoring curve is not an
additional learned cross-fitted network.  Exact double robustness applies to
the untruncated score when $G$ is correct; truncation can introduce bias when it
is active.  This is why the semi-synthetic $2\times2$ identity check disables
clipping, while the observational analysis reports support, effective sample
size, and clipping sensitivity.  Real-data uncertainty is reported with a
patient-level nonparametric bootstrap of the out-of-fold factual AIPW value gain
rather than Eq.~\eqref{eq:aipw-ci}; an analytic interval would need to account for
estimation of $G$.

\section{Implemented Support-Constrained Monte Carlo Planning}

The released planner first forms the estimated support set
\begin{equation}
 \widehat\Asup(h)=\{a\in\Aset:\widehat e_a(h)\ge0.05\}.
 \label{eq:estimated-support}
\end{equation}
For each retained regimen it draws $K=8$ diffusion worlds, computes day-scale
RMST values $\widehat R_{\tau k}(a,h)$, and uses
\begin{equation}
 \overline R_{\tau K}(a,h)
 -1.96\frac{s_K(a,h)}{\sqrt K},
 \qquad
 s_K^2(a,h)=\frac1{K-1}\sum_{k=1}^K
 \{\widehat R_{\tau k}(a,h)-\overline R_{\tau K}(a,h)\}^2,
 \label{eq:implemented-mc-lcb}
\end{equation}
as a conservative Monte Carlo ranking score.  This score captures variation
across diffusion draws conditional on one fitted descriptor and survival model.
It does not include fold-to-fold parameter uncertainty, descriptor uncertainty,
survival-model uncertainty, or multiplicity across actions.

The following theorem records the stronger calibration target.  Let
$b(a,h)\ge0$ be a calibrated simultaneous radius and define
\begin{equation}
 \pi_{\mathrm{LCB}}(h)
 \in\arg\max_{a\in\Asup(h)}
 \{\widehat V_\tau(a,h)-b(a,h)\}.
 \label{eq:lcb-policy}
\end{equation}

\begin{theorem}[Supported LCB near-optimality]
\label{thm:lcb}
Suppose that, with probability at least $1-\delta$,
\begin{equation}
 |\widehat V_\tau(a,h)-V_\tau(a,h)|\le b(a,h)
 \quad\text{simultaneously for all }a\in\Asup(h).
 \label{eq:uniform-event}
\end{equation}
On this event,
\begin{equation}
 V_\tau(\pi_{\mathrm{LCB}}(h),h)
 \ge
 \max_{a\in\Asup(h)}V_\tau(a,h)
 -2\max_{a\in\Asup(h)}b(a,h).
 \label{eq:lcb-bound}
\end{equation}
\end{theorem}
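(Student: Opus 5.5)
The argument is a deterministic sandwich carried out on the event in Eq.~\eqref{eq:uniform-event}; all probability is absorbed into assuming that event. Fix $h$, write $\Asup=\Asup(h)$, and assume $\Asup$ is nonempty and finite, which holds because $\Aset$ is finite. Let $a^*\in\arg\max_{a\in\Asup}V_\tau(a,h)$, which exists by finiteness, and let $\hat a=\pi_{\mathrm{LCB}}(h)$. Set $\bar b=\max_{a\in\Asup}b(a,h)$. On the event, every supported action satisfies $V_\tau(a,h)\ge\widehat V_\tau(a,h)-b(a,h)$ and $\widehat V_\tau(a,h)\le V_\tau(a,h)+b(a,h)$.

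\textbf{Step 1 (lower bound at the chosen action).} Apply the first inequality at $\hat a$ to get $V_\tau(\hat a,h)\ge\widehat V_\tau(\hat a,h)-b(\hat a,h)$. This uses only the validity of the lower confidence score.

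\textbf{Step 2 (optimality of the LCB maximizer).} Since $\hat a$ maximizes $\widehat V_\tau(a,h)-b(a,h)$ over $\Asup$ and $a^*\in\Asup$,
\[
\widehat V_\tau(\hat a,h)-b(\hat a,h)\ge\widehat V_\tau(a^*,h)-b(a^*,h).
\]
It matters that the argmax in Eq.~\eqref{eq:lcb-policy} ranges over exactly the same set $\Asup(h)$ as the oracle maximum. Otherwise $a^*$ would not be a valid competitor.

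\textbf{Step 3 (upper-bound direction at the oracle action).} Apply the two-sided event at $a^*$ to get $\widehat V_\tau(a^*,h)\ge V_\tau(a^*,h)-b(a^*,h)$.

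Chaining Steps 1--3 gives
\[
V_\tau(\hat a,h)\ge V_\tau(a^*,h)-2b(a^*,h)\ge\max_{a\in\Asup}V_\tau(a,h)-2\bar b,
\]
which is Eq.~\eqref{eq:lcb-bound}.

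This chain actually yields the sharper bound $2b(a^*,h)$, which is worth noting in a remark. The $\max$ form stated in the theorem follows from it trivially.

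I do not expect a real obstacle here. The only points needing care are bookkeeping. First, the bound is conditional on the event, so it holds with probability at least $1-\delta$ and not almost surely. Second, if $\widehat V_\tau$ is the Monte Carlo value from Eq.~\eqref{eq:mc-value}, the radius $b$ must include the term $r_K(\delta,J)$ from Lemma~\ref{lem:mc}. Plugging in $b=\varepsilon(h)+r_K$ via Theorem~\ref{thm:value-error} recovers Corollary~\ref{cor:sampled-regret}, which serves as a consistency check. Third, ties in the argmax are harmless because any maximizer satisfies Step 2.
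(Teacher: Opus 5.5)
Your proof is correct and is essentially the paper's own argument: the same chain $V_\tau(\widehat a,h)\ge\widehat V_\tau(\widehat a,h)-b(\widehat a,h)\ge\widehat V_\tau(a^*,h)-b(a^*,h)\ge V_\tau(a^*,h)-2b(a^*,h)\ge V_\tau(a^*,h)-2\max_{a\in\Asup(h)}b(a,h)$. As in the paper, the middle step uses that the LCB argmax ranges over the same $\Asup(h)$, and your one small slip is that nonemptiness of $\Asup(h)$ must be assumed rather than deduced from finiteness of $\Aset$.
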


The guarantee is conditional on calibrated coverage in
Eq.~\eqref{eq:uniform-event}, correct support classification, and use of the
same supported action set on both sides of Eq.~\eqref{eq:lcb-bound}.  The normal
Monte Carlo score in Eq.~\eqref{eq:implemented-mc-lcb} does not establish these
conditions.  Accordingly, Theorem~\ref{thm:lcb} is a target guarantee for a
future calibrated ensemble and is not claimed for the current real-data
planner.  Current evidence is limited to support-threshold sensitivity,
world-sampling convergence, and observational off-policy evaluation.

\section{What Cannot Be Identified}

\begin{proposition}[Unmeasured-confounding non-identifiability]
\label{prop:nonidentification}
Without Assumption~\ref{ass:exchangeability}, the observational distribution
does not generally identify $V_\tau(a)$, even when both treatment levels have
positive marginal probability.
\end{proposition}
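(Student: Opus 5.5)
The plan is to construct two structural causal models that induce exactly the same observational law of $O=(H,A,M,X^+,\widetilde T,\Delta)$ but assign different values to $V_\tau(a)$ for some regimen $a$. Non-identifiability then follows directly: any functional of the observed distribution agrees on the two models, so it cannot equal $V_\tau(a)$ in both. It suffices to use the simplest setting. Take a single regimen pair $\Aset=\{0,1\}$ and make $H$ degenerate (constant), so that conditional exchangeability given $H$ reduces to marginal exchangeability. Let $M$ and $X^+$ be constant as well, and assume no censoring, with $C^{\mathrm{cen}}>\tau$ almost surely, so that $G\equiv1$. These choices do not conflict with the proposition: the other assumptions (consistency, positivity, censoring) may all hold, and only Assumption~\ref{ass:exchangeability} is dropped.

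Introduce an unmeasured binary confounder $U\sim\mathrm{Bernoulli}(1/2)$ that enters both $f_A$ and $f_T$; it plays the role of $U_A$ and $U_T$ being dependent. In model $\mathcal M_1$, set $A=U$ and $T(a)=\tau\cdot g_1(a,U)$ for deterministic values in $(0,1]$, so $U_\tau(a)=g_1(a,U)$. In model $\mathcal M_2$, keep the same assignment $A=U$ but change the potential outcomes along the unobserved arm. The observed data reveal only $T(1)$ on $\{U=1\}$ and $T(0)$ on $\{U=0\}$. We can therefore set $g_2(1,1)=g_1(1,1)$ and $g_2(0,0)=g_1(0,0)$ while choosing $g_2(1,0)\neq g_1(1,0)$. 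The observed joint law of $(A,U_\tau)$ then coincides across the two models, and consistency holds in both. However,
\begin{equation*}
V_\tau^{(j)}(1)=\tfrac12 g_j(1,1)+\tfrac12 g_j(1,0)
\end{equation*}
differs between $j=1$ and $j=2$.

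The step that needs care is positivity. With $A=U$ deterministic, $e_1(h)=1/2$ marginally, so both treatment levels have positive marginal probability, as the proposition requires. Even so, one could object that the counterexample relies on a within-$U$ positivity failure. To rule this out, I would replace the assignment by $\Pp(A=1\mid U=u)=p_u$ with $0<p_0<p_1<1$. I would then perturb the counterfactual law, namely the joint law of $(U,T(0),T(1))$, in a way that keeps each observed conditional law of $T\mid A=a$ and the marginal of $A$ fixed while changing $\E\,T(1)$. A concrete way to do this is to let $T(a)$ given $U$ take two values and solve a small linear system. The observed quantities are $\E[U_\tau\mid A=a]$ and $\Pp(A=a)$. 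The value $V_\tau(1)=\sum_u \Pp(U=u)\E[U_\tau(1)\mid U=u]$ weights the two strata differently from the observed mixture $\E[U_\tau\mid A=1]$, which uses weights $\Pp(U=u\mid A=1)$. Consequently, moving $\E[U_\tau(1)\mid U=1]$ up and $\E[U_\tau(1)\mid U=0]$ down while holding the $A=1$ mixture fixed changes $V_\tau(1)$ whenever $p_0\neq p_1$.

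The main obstacle is verifying that the two laws match on the full observed record $O$, not just on the means. Because $M$, $X^+$ and $H$ are constant and there is no censoring, it suffices that the conditional laws $T\mid A=a$ agree. I would arrange this by using point masses in the deterministic version, or by constructing the perturbation at the level of distributions rather than means. Concluding remarks would note that the exchangeability given $H$ that the counterexample violates is exactly what lets $\E[U_\tau\mid H,A=a]$ replace $\E[U_\tau(a)\mid H]$ in the proof of Theorem~\ref{thm:identification}, and that the counterexample embeds into the full SCM of Eq.~\eqref{eq:scm} by taking $U$ to be shared by $U_A$ and $U_T$.
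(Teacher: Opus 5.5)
Your proposal is correct and follows essentially the paper's own proof. The paper also takes $U\sim\mathrm{Bernoulli}(1/2)$, no measured history, $A=U$, and two SCMs ($Y=A$ versus $Y=U$) that agree on the observed arms and differ only on the unobserved one, which is exactly your $g_2(1,0)\neq g_1(1,0)$ construction. Your stochastic-assignment variant with $0<p_0<p_1<1$ and two-point outcome laws is a valid optional strengthening that rules out positivity failure within $U$-strata; the paper does not need it because positivity is only required given $H$. Your explicit no-censoring, $T\le\tau$ embedding also checks the remaining assumptions more carefully than the paper's generic $Y$.
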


\begin{remark}[No unique individual cross-world CT]
\label{rem:crossworld}
Theorem~\ref{thm:identification} identifies the marginal interventional
distribution $p(X^+(a)\mid H=h)$ under its assumptions.  It does not identify
the joint coupling
$p\{X^+(a),X^+(a')\mid H=h\}$ from one factual trajectory per patient.
Different copulas can share the same identified marginals while assigning
different alternate images to the same individual.  A claim of a unique
patient-specific counterfactual CT would require additional rank-preservation,
invertibility, or shared-exogenous-noise assumptions and corresponding
validation.  CaMeWM therefore reports samples from an interventional
distribution, not the true unobserved scan.
\end{remark}

\section{Data, Implementation, and Artifact Correspondence}
\label{sec:artifact-correspondence}

This section records the executable contract behind the mathematical objects.
It is deliberately more specific than the main paper: an implementation change
that violates this contract requires a corresponding change to the claim or
theory, even if predictive metrics improve.

\subsection{Outcome-blind image-state construction}

Image-state construction is completed before survival outcomes are accessed.
In particular, cohort exclusions, scan pairing, label-semantic checks,
registration, cropping, and pseudo-follow-up-mask QC do not use overall-survival
time, event status, or response-derived outcome labels.  The factual regimen is
retained as the exposure, but it never determines image geometry, crop position,
or mask acceptance.

HCC-TACE-Seg provides a multi-label baseline segmentation.  We use source
label~2 (\emph{Mass/HCC tumor}) as the binary lesion state, rather than the
union of liver, vessel, aorta, and tumor labels.  Source-label counts,
geometric checks, volume strata, and CT overlays audit this choice.  Of the
80 episodes with one of the two predeclared supported regimens, one has an
empty baseline label-2 mask and is excluded, leaving 79 episodes before
cross-validation.

Paired scans are selected using acquisition-date and phase rules, checked in
physical coordinates, and rigidly registered.  CT volumes are resampled to
$1\,\mathrm{mm}$ isotropic spacing, clipped to $[-175,600]$ HU, and normalized;
masks use nearest-neighbor resampling.  Both time points use the same
$96^3$ physical crop centered on the baseline lesion, preventing independent
recentring of pre- and post-treatment patches.  For each outer fold, the
follow-up segmenter is selected only on the outer-training validation split;
held-out patients are processed once with the selected model and cannot affect
model selection or QC thresholds.

\subsection{Fold-local fitting and model selection}

Evaluation uses five patient-level outer folds with a prespecified patient
partition.
Within each outer-training partition, 20\% is reserved for validation; outer-test
patients do not select checkpoints, preprocessing statistics, or thresholds.
All neural modules use AdamW for at most 200 epochs with stage-specific early
stopping.  Folds are scheduled across four NVIDIA RTX A6000 devices, but each
fold remains a statistically independent training and evaluation unit.

\begin{table}[htbp]
\centering
\small
\caption{Validation criteria used to select fold-local checkpoints.  Selection
metrics are fixed before examining outer-test results.}
\label{tab:selection-contract}
\begin{tabular}{p{0.21\linewidth}p{0.36\linewidth}p{0.34\linewidth}}
\toprule
Module & Checkpoint criterion & Safeguard \\
\midrule
Follow-up segmenter & Maximum validation Dice & Minimum 40 epochs; patience 20 \\
Response predictor & Minimum standardized validation MAE & Minimum 40 epochs; patience 25 \\
Joint propensity & Minimum validation cross-entropy & Minimum 30 epochs; patience 20 \\
Residual DDPM & Minimum sampled tumor-region MAE, checked every 10 epochs &
Minimum 60 epochs; patience 5 checks \\
Response-Cox survival & Maximum validation C-index; validation loss only when C-index is undefined &
Minimum 50 epochs; patience 30 \\
Planner & No test-set tuning; $K=8$, support threshold 0.05 &
Same locked values for every fold and method \\
\bottomrule
\end{tabular}
\end{table}

\subsection{Full experimental protocol and statistical reporting}
\label{sec:full-experimental-protocol}

The public HCC-TACE-Seg release contains 105 patients with pre/post-TACE
multiphasic CT, treatment, response, and survival records.  The manifest
builder retains \CuratedN{} episodes with the required local entries.  We
predeclare cisplatin--doxorubicin--mitomycin-C and doxorubicin LC beads as the
two represented joint regimens.  Action screening retains \ActionSupportedN{}
episodes (50/30); one empty released baseline tumor mask is then excluded,
leaving \SupportedN{} episodes (50/29).  No exclusion uses response or survival
outcomes.  Follow-up selection matches the baseline acquisition identifier
before physical-coordinate registration.  Four rotation test-time views and an
intensity fallback for predicted masks below 16 voxels are recorded in the QC
artifacts; response descriptors are extracted only after registration and
follow-up mask prediction.

All imaging modules are optimized with AdamW for at most 200 epochs and
stage-specific early stopping.  The response and diffusion encoders use 32 base
channels, and the segmenter and propensity image encoder use 24 and 16,
respectively.  Diffusion and response training use automatic mixed precision;
the response-Cox head is regularized and low-dimensional.  Experiments use
Linux, Python 3.9, PyTorch $\geq 2.2$, and four NVIDIA RTX A6000 GPUs (48 GB).
Outer folds may occupy different devices but never share test data.  Exact
package constraints, batch sizes, and launch commands are included with the
released materials.

Generation comparators are persistence ($I^+=I^-$), deterministic 3D U-Net,
conditional VAE-3D, mask-conditioned DDPM, and MeWM action-conditioned
DDPM without the response descriptor $M$ or causal adjustment.  We report
global and tumor HU-MAE, PSNR, 3D SSIM, and change-map $\Delta$SSIM.  Survival
comparators are CoxPH, DeepSurv, IPTW-Cox, MeWM, and CaMeWM; reported
metrics are Uno's C-index, integrated Brier score (IBS), RMST MAE, and
Harrell's C-index.  Planning uses the same comparator set.  Historical-regimen
agreement, macro-F1, and balanced accuracy are behavioural references only,
not counterfactual ground truth.

All uncertainty summaries use patient-level OOF values, never individual CT
slices or diffusion samples.  IBS integrates IPCW Brier scores at 12, 24, and
36 months.  Patient-bootstrap intervals use 2,000 resamples; paired method
comparisons resample the same patient identifiers.  The factual-only AIPW
policy estimator uses support threshold $\eta=0.05$ and correction-weight cap
10.  Its clinical outcome nuisance excludes generated states.  Overlap weighting
is reported solely as a variance diagnostic and is never selected to improve the
reported policy value.  Supplementary Fig.~S1 reports propensity calibration,
overlap, and balance; Fig.~S2 reports generated-to-factual survival transport;
Fig.~S3 reports stability for $K\in\{2,4,8,16,32\}$; and Fig.~S4 audits the
non-certifying association between pseudo-label confidence and OOF
tumor-region error.

\subsection{Reproducibility materials and result traceability}

The anonymized accompanying materials are organized around analytical stages,
not repository-specific file names.  They provide the fixed split definition,
configuration values, environment instructions, public-data access guidance,
and the fold-level prediction files required to recreate every reported OOF
summary.  Table~\ref{tab:artifact-map} states the audit trail at the level
needed to assess reproducibility; a checkpoint alone is not evaluation
evidence.

\begin{table}[htbp]
\centering
\small
\caption{Reproducibility materials and audit trail.  The table intentionally
describes analytical stages rather than release-specific script or archive names.}
\label{tab:artifact-map}
\begin{tabular}{p{0.25\linewidth}p{0.35\linewidth}p{0.31\linewidth}}
\toprule
Analytical stage & Reproducible input or operation & Required evidence \\
\midrule
Image-state construction &
label semantics, scan pairing, registration, common physical crop &
cohort flow, exclusions, label and registration QC \\
Fold-local model fitting &
training/validation partitions, fixed selection criteria, frozen outer test fold &
selection records and fold-level predictions \\
Response and transition estimation &
response descriptor, causal anchor, residual transition, world sampling &
module evaluations, sampled worlds, and qualitative cases \\
Survival and planning &
factual-state survival fit, supported action scoring, factual-only AIPW &
survival, planning, support, and policy-value summaries \\
OOF aggregation &
patient-level pooling and paired bootstrap resampling &
reported tables, figures, and confidence intervals \\
\bottomrule
\end{tabular}
\end{table}

The accompanying materials specify the software environment, fixed five-fold
OOF partition, early-stopping criteria, and the commands that produce the
fold-level summaries.  They do not redistribute raw CT, clinical records,
survival outcomes, or trained checkpoints.  Instead, they provide data-access
instructions, cohort-construction logic, and validation checks so that an
authorized local copy of the public collection can be reconstructed and
audited.  Reported manuscript values are derived only from frozen OOF
predictions and patient-level bootstrap summaries.

\section{Theory-to-Experiment Contract}
\label{sec:contract}

Claims are enabled only when both the corresponding implementation and evidence
are present.
\begin{description}
 \item[Theorem~\ref{thm:identification}.]
 Implement joint-regimen propensity, a pre-treatment adjustment set, and RMST.
 Report a pre-specified DAG, the confounder table, and the explicit
 unmeasured-confounding limitation.
 \item[Theorem~\ref{thm:value-error}.]
 Evaluate response-descriptor, transition, and survival factors separately.  Report module
 ablations, per-factor errors, semi-synthetic value error, and empirical bound
 coverage.
 \item[Lemma~\ref{lem:mc}.]
 Draw independent world-model samples for each regimen.  The semi-synthetic
sampling experiment verifies $K^{-1/2}$ scaling; the real planner uses $K=8$
matched diffusion draws and one deterministic response-descriptor prediction.
 \item[Theorem~\ref{thm:factual-aipw}.]
 Use outer-fold out-of-fold policies, joint propensity, marginal-KM IPCW, and
 the factual-only AIPW score.  The exact unclipped identity is validated in all
 four correct/wrong nuisance combinations semi-synthetically.  Observational
 results must additionally disclose denominator truncation and bootstrap
 uncertainty.
 \item[Theorem~\ref{thm:lcb}.]
 The current implementation satisfies support restriction but not simultaneous
 uncertainty calibration.  Therefore report Eq.~\eqref{eq:implemented-mc-lcb}
 as a Monte Carlo ranking score, not as evidence that the theorem's
 near-optimality guarantee holds.
\end{description}

For observational HCC-TACE-Seg experiments, the minimum causal diagnostics are:
joint-regimen frequencies; propensity overlap; effective sample size
$(\sum_iw_i)^2/\sum_iw_i^2$; clipping rate; pre/post-weighting standardized
mean differences; supported-action fraction; and cross-fitted factual-only
AIPW bootstrap intervals.  Treatment agreement F1 remains a secondary imitation
metric.  Out-of-fold factual-only AIPW value is the primary implemented planning evaluation;
clinician assessment would be additional evidence rather than an output of the
released code.

\section{Semi-Synthetic Validation Required by the Theory}

Real observational data do not reveal ground-truth counterfactual images or
effects.  The supplement therefore uses a semi-synthetic benchmark that
resamples the observed HCC clinical-covariate distribution while generating
treatments, response descriptors, transitions, and outcomes from known SCMs.
It does not train or evaluate the full image-level CaMeWM checkpoints.

\begin{enumerate}
 \item \textbf{Identifiability recovery.} Increase simulated sample size and
 compare estimated versus true ATE and fixed-policy value.
 \item \textbf{Factual-AIPW $2\times2$ identity check.} In the known SCM,
 define a correct propensity or outcome nuisance as its oracle population
 function and a misspecified nuisance as an intercept-only function.  Evaluate
 all four cells without weight clipping.  The DR estimator should be centered
 at zero bias in both one-correct cells and should fail in the both-wrong cell.
 Estimated-nuisance convergence and clipping are assessed separately so that
 they do not obscure the population identity in Theorem~\ref{thm:factual-aipw}.
 \item \textbf{Overlap stress test.} Reduce $\min e_a(H)$ and report effective
 sample size and the fraction with propensities in $[0.05,0.95]$.
 \item \textbf{Factor-error test.} Use an exactly enumerable bounded binary SCM,
 independently contaminate response-descriptor, transition, and outcome factors, and
 compute their realized total-variation or sup-norm errors.  Verify condition by
 condition that the maximum causal-value error is bounded by
 $\varepsilon_M+\varepsilon_X+\varepsilon_Q$ in
 Eq.~\eqref{eq:value-bound}, and report empirical bound coverage.
 \item \textbf{Sampling test.} Vary $K$ and verify Monte Carlo deviation and
 policy instability decrease at the predicted $K^{-1/2}$ rate.
 \item \textbf{Hidden-confounding sensitivity.} Add an unobserved common cause of
 treatment and outcome to demonstrate the limitation in
 Proposition~\ref{prop:nonidentification}.
\end{enumerate}

\section{Proof of Theorem~\ref{thm:identification}}
\label{app:identification}

\begin{proof}
Fix a supported $(a,h)$.  By conditional exchangeability,
\begin{equation}
 \mathcal L\{M(a),X^+(a),U_\tau(a)\mid H=h\}
 =\mathcal L\{M(a),X^+(a),U_\tau(a)\mid H=h,A=a\}.
 \label{eq:proof-exchange}
\end{equation}
By consistency, the right-hand side equals the observed conditional law
$\mathcal L(M,X^+,U_\tau\mid H=h,A=a)$.  Therefore,
\begin{align}
 V_\tau(a,h)
 &=\E\{U_\tau(a)\mid H=h\}\\
 &=\E\{U_\tau\mid H=h,A=a\}\\
 &=\int\!\!\int
   \E(U_\tau\mid h,a,m,x)
   p(x,m\mid h,a)\,dx,dm\\
 &=\int\!\!\int
   Q_\tau(h,a,m,x)
   p_X(x\mid h,a,m)p_M(m\mid h,a)\,dx,dm.
\end{align}
The last equality is the chain-rule response factorization in
Eq.~\eqref{eq:causal-factorization}; it does not assert that the image-derived
descriptor is measured before the follow-up CT.  Positivity ensures that all
observed conditional laws on the right are defined for actions considered by
the policy.  Finally, by
definition, maximizing the identified value over $\Asup(h)$ produces the
optimal supported action.
\end{proof}

\section{Proof of Theorem~\ref{thm:value-error}}
\label{app:value-error}

\begin{proof}
Suppress $(h,a)$ in the notation and write
$P=P_MP_{X\mid M}$ and
$\widehat P=\widehat P_M\widehat P_{X\mid M}$.  Add and subtract
$\int Q\,d\widehat P$:
\begin{align}
 |\widehat V-V|
 &\le
 \left|\int(\widehat Q-Q)\,d\widehat P\right|
 +\left|\int Q\,d\widehat P-\int Q\,dP\right|\\
 &\le \varepsilon_Q
 +\left|\int q(m)\,d\widehat P_M(m)
          -\int \widetilde q(m)\,dP_M(m)\right|,
 \label{eq:bound-step1}
\end{align}
where
$q(m)=\int Q(m,x)d\widehat P_{X\mid m}(x)$ and
$\widetilde q(m)=\int Q(m,x)dP_{X\mid m}(x)$.
Add and subtract $\int\widetilde q\,d\widehat P_M$ in the second term.  Since
$0\le Q,q,\widetilde q\le1$,
\begin{align}
 \left|\int(q-\widetilde q)d\widehat P_M\right|
 &\le\sup_m
 \left|\int Q(m,x)d\widehat P_{X\mid m}
       -\int Q(m,x)dP_{X\mid m}\right|
 \le\varepsilon_X,\\
 \left|\int\widetilde q\,d\widehat P_M
       -\int\widetilde q\,dP_M\right|
 &\le\varepsilon_M.
\end{align}
The last inequalities use the variational characterization of total variation
for $[0,1]$-valued functions.  Combining them with
Eq.~\eqref{eq:bound-step1} proves Eq.~\eqref{eq:value-bound}.
\end{proof}

\section{Proof of Corollary~\ref{cor:plugin-regret}}

\begin{proof}
For fixed $h$, let $a^*=\pi^*(h)$ and
$\widehat a=\widehat\pi(h)$.  Then
\begin{align}
 V(a^*,h)-V(\widehat a,h)
 &=V(a^*,h)-\widehat V(a^*,h)
  +\widehat V(a^*,h)-\widehat V(\widehat a,h)
  +\widehat V(\widehat a,h)-V(\widehat a,h)\\
 &\le \varepsilon(h)+0+\varepsilon(h).
\end{align}
The middle term is nonpositive because $\widehat a$ maximizes $\widehat V$ on
the same supported action set.
\end{proof}

\section{Proofs of Lemma~\ref{lem:mc} and Corollary~\ref{cor:sampled-regret}}

\begin{proof}[Proof of Lemma~\ref{lem:mc}]
Conditionally on the fitted world model, the summands in
Eq.~\eqref{eq:mc-value} are independent, lie in $[0,1]$, and have mean
$\widehat V_\tau(a,h)$.  Hoeffding's inequality gives
Eq.~\eqref{eq:mc-hoeffding}.  Applying the union bound to $J$ supported actions
gives probability at most $2J\exp(-2Kr^2)$ that any action violates the bound.
Setting this expression to $\delta$ yields Eq.~\eqref{eq:mc-radius}.
\end{proof}

\begin{proof}[Proof of Corollary~\ref{cor:sampled-regret}]
On the simultaneous Monte Carlo event, for every supported action,
\begin{equation*}
 |\widehat V_{\tau,K}(a,h)-V_\tau(a,h)|
 \le
 |\widehat V_{\tau,K}-\widehat V_\tau|
 +|\widehat V_\tau-V_\tau|
 \le r_K+\varepsilon(h).
\end{equation*}
Repeat the three-term argmax decomposition in the proof of
Corollary~\ref{cor:plugin-regret} with uniform error
$r_K+\varepsilon(h)$.
\end{proof}

\section{Proof of Theorem~\ref{thm:factual-aipw}}
\label{app:factual-aipw}

\begin{proof}
We first prove double robustness at the population level.  Conditional on
$H=h$, write $a_\pi=\pi(h)$ and let barred symbols denote arbitrary candidate
nuisances.  Since
$\E(\widetilde U_\tau\mid H=h,A=a)=\mu_a(h)$,
\begin{align}
 &\E\left[
 \bar\mu_{a_\pi}(h)
 +\frac{\1\{A=a_\pi\}}{\bar e_A(h)}
  \{\widetilde U_\tau-\bar\mu_A(h)\}
 \;\middle|\;H=h\right]\\
 &=\bar\mu_{a_\pi}(h)
 +\frac{e_{a_\pi}(h)}{\bar e_{a_\pi}(h)}
  \{\mu_{a_\pi}(h)-\bar\mu_{a_\pi}(h)\}\\
 &=\mu_{a_\pi}(h)
 +\frac{\bar e_{a_\pi}(h)-e_{a_\pi}(h)}
       {\bar e_{a_\pi}(h)}
  \{\bar\mu_{a_\pi}(h)-\mu_{a_\pi}(h)\}.
 \label{eq:aipw-remainder}
\end{align}
The remainder is zero if either $\bar e=e$ or $\bar\mu=\mu$.  Integrating over
$H$ proves part (i).

For part (ii), let
\begin{equation*}
 \psi(O;\eta,\pi)
 =\mu_{\pi(H)}(H)
 +\frac{\1\{A=\pi(H)\}}{e_A(H)}
  \{\widetilde U_\tau-\mu_A(H)\},
\end{equation*}
and let $\eta_0=(e,\mu)$.  Cross-fitting makes each validation-fold score
conditionally independent of the nuisance training sample.  Decompose
\begin{align}
 \widehat V_{\mathrm{AIPW}}-V
 &=(\mathbb P_n-P)\psi(O;\eta_0,\pi)\\
 &\quad +(\mathbb P_n-P)
  \{\psi(O;\widehat\eta,\pi)-\psi(O;\eta_0,\pi)\}\\
 &\quad +P\{\psi(O;\widehat\eta,\pi)-\psi(O;\eta_0,\pi)\}.
 \label{eq:dml-decomp}
\end{align}
The first term obeys the classical central limit theorem.  Conditional
cross-fitting plus $L_2$ consistency makes the second term $o_p(n^{-1/2})$.
By Eq.~\eqref{eq:aipw-remainder}, Cauchy--Schwarz, and bounded inverse propensity,
the absolute third term is bounded by
\begin{equation*}
 C\norm{\widehat e-e}_2\norm{\widehat\mu-\mu}_2
 =o_p(n^{-1/2}).
\end{equation*}
Multiplying Eq.~\eqref{eq:dml-decomp} by $\sqrt n$ and applying Slutsky's
theorem gives Eq.~\eqref{eq:aipw-clt}.  Consistency of the empirical score
variance follows from the same cross-fitting and moment conditions.
\end{proof}

\section{Proof of Theorem~\ref{thm:lcb}}

\begin{proof}
Fix $h$ and let
$a^*\in\arg\max_{a\in\Asup(h)}V_\tau(a,h)$ and
$\widehat a=\pi_{\mathrm{LCB}}(h)$.  On event
\eqref{eq:uniform-event},
\begin{align}
 V_\tau(\widehat a,h)
 &\ge \widehat V_\tau(\widehat a,h)-b(\widehat a,h)\\
 &\ge \widehat V_\tau(a^*,h)-b(a^*,h)\\
 &\ge V_\tau(a^*,h)-2b(a^*,h)\\
 &\ge V_\tau(a^*,h)-2\max_{a\in\Asup(h)}b(a,h).
\end{align}
The second inequality is the definition of the LCB maximizer.
\end{proof}

\section{Proof of Proposition~\ref{prop:nonidentification}}

\begin{proof}
Consider binary unobserved $U\sim\operatorname{Bernoulli}(1/2)$ and no measured
history.  In SCM~1, set $A=U$ and $Y=A$.  In SCM~2, set $A=U$ and $Y=U$.
Both SCMs induce the identical observational law
$\Pp(A=0,Y=0)=\Pp(A=1,Y=1)=1/2$, and both treatment levels have positive
marginal probability.  Under $\doop(A=1)$, however, SCM~1 gives $Y=1$ almost
surely, whereas SCM~2 gives $Y=U$ and hence $\E\{Y(1)\}=1/2$.  Thus the same
observational distribution is compatible with different causal values.  The
difference is unmeasured common causation of $A$ and $Y$ in SCM~2, which
violates conditional exchangeability.
\end{proof}

\section{Censoring Identity}
\label{app:censoring}

\begin{proof}
For fixed $t\le\tau$, conditional on $Z$ and using
$C^{\mathrm{cen}}\ind(T,Z)$,
\begin{align}
 \E\left[
 \frac{\1\{\widetilde T\ge t\}}{G(t)}
 \;\middle|\;Z\right]
 &=\frac{\Pp(T\ge t,C^{\mathrm{cen}}\ge t\mid Z)}{G(t)}\\
 &=\frac{\Pp(T\ge t\mid Z)G(t)}{G(t)}\\
 &=\Pp(T\ge t\mid Z).
\end{align}
Integrating from $0$ to $\tau$, dividing by $\tau$, and applying Fubini's
theorem yields
$\E(\widetilde U_\tau\mid Z)=\E(U_\tau\mid Z)$.
\end{proof}

\section{Reviewer-Facing Claim Boundaries}

The following wording is defensible only after the theory-to-experiment contract
in Section~\ref{sec:contract} is satisfied.

\begin{itemize}
 \item \emph{Defensible:} ``Under measured confounding, positivity, consistency,
 and censoring assumptions, CaMeWM identifies and estimates supported-regimen
 RMST values through a response-factorized g-formula.''
 \item \emph{Defensible:} ``The factor-wise error bound links response-proxy,
 transition, and survival-factor errors to policy regret; its empirical bound
 coverage is established in the known-ground-truth semi-synthetic experiment.''
 \item \emph{Not defensible:} ``The generated image is the patient's true
 counterfactual follow-up CT.''
 \item \emph{Not defensible:} ``The five image-derived response descriptors are
 validated biological mediators or identify natural direct and indirect
 effects.''
 \item \emph{Not defensible:} ``IPTW proves that treatment recommendations are
 causal.''  Weighting is valid only under the stated assumptions and adequate
 overlap.
 \item \emph{Not defensible:} ``Generated-world discrimination or transport
 diagnostics establish policy benefit.''  Those diagnostics assess the
 end-to-end transition-and-utility path; the real-data policy-value analysis is
 the separate factual-only AIPW evaluation in
 Section~\ref{sec:factual-aipw}.
 \item \emph{Not defensible:} ``The current $K=8$ normal Monte Carlo LCB has the
 calibrated finite-sample guarantee in Theorem~\ref{thm:lcb}.''  That theorem
 requires simultaneous coverage that the released planner does not estimate.
 \item \emph{Not defensible:} ``A theorem proves clinical benefit.''  Theorems
 establish statistical properties; clinical benefit requires external and
 prospective evidence.
\end{itemize}

\clearpage

\section{Empirical Protocol and Additional Results}
\label{sec:empirical-supplement}

\subsection{Cohort Construction and Locked Evaluation Protocol}

Table~\ref{tab:cohort-flow} records the outcome-blind cohort construction used
throughout the final experiment.  The released HCC-TACE-Seg collection contains
105 patients.  The manifest builder retained 99 episodes with the required
longitudinal local records; restricting the analysis to the two adequately
represented complete joint regimens retained 80 episodes.  One released
baseline mask was empty for source label~2 (Mass/HCC tumor) and was excluded
before model fitting, leaving 79 episodes.  No survival time, event indicator,
or response outcome was used in these exclusions.

\begin{table}[htbp]
\centering
\small
\caption{Outcome-blind cohort construction for the fixed five-fold OOF protocol.}
\label{tab:cohort-flow}
\begin{tabular}{lr}
\toprule
Stage & Episodes \\
\midrule
Public HCC-TACE-Seg release & 105 \\
Longitudinal local manifest with required imaging and clinical records & 99 \\
Two predeclared supported complete joint regimens & 80 \\
Empty released source label-2 tumor mask excluded & 1 \\
\midrule
Final patient-level OOF cohort & 79 \\
\bottomrule
\end{tabular}
\end{table}

The final protocol uses a prespecified patient-level five-fold outer
cross-validation partition.  Each outer-training partition reserves 20\% for
internal validation.  All transforms, imputation, standardization, segmentation,
propensity, response, image-transition, and survival modules are fitted
fold-locally.  Reported confidence intervals use 2,000 patient-level
bootstrap resamples of the pooled OOF predictions.  Table~\ref{tab:protocol}
lists the locked decisions that matter for reproducibility.

\begin{table}[htbp]
\centering
\small
\caption{Locked five-fold OOF evaluation decisions.}
\label{tab:protocol}
\begin{tabular}{p{0.32\linewidth}p{0.57\linewidth}}
\toprule
Item & Final specification \\
\midrule
Tumor semantics & Source label~2 (Mass/HCC tumor); empty source masks excluded \\
Spatial preprocessing & Rigid physical-grid registration; 1-mm isotropic spacing; shared $96^3$ baseline-tumor physical crop \\
Image scale & HU window $[-175,600]$ followed by the common normalization used for all methods \\
Follow-up lesion state & Fold-local segmenter, validation Dice selection, four-view test-time aggregation, recorded fallback QC \\
Treatment support & Two complete joint regimens; support threshold $\eta=0.05$ \\
World integration & $K=8$ diffusion worlds per candidate regimen \\
Survival utility & Validation-selected response-Cox model with fold-local Breslow calibration \\
Inference & Patient-level OOF predictions; paired 2,000-resample bootstrap intervals \\
\bottomrule
\end{tabular}
\end{table}

\subsection{Pre-treatment Covariates and Missing-Data Handling}
\label{sec:covariates-missingness}

Table~\ref{tab:covariates-missingness} specifies the eight pre-treatment
clinical covariates used in the joint-propensity and response-Cox inputs.
All are recorded before treatment assignment.  Preprocessing statistics are
fit using outer-training patients only: AFP is transformed as
$\log(1+\mathrm{AFP})$, missing values are replaced by the corresponding
outer-training median, and each covariate is median-centered and scaled by
its outer-training standard deviation.  A binary missingness indicator is
concatenated for every covariate.  Thus, no held-out patient contributes to
imputation or scaling.  The only incomplete selected covariate is baseline
tumor maximum diameter (41/79 episodes); its missingness indicator is retained
in both the propensity and utility inputs.

\begin{table}[htbp]
\centering
\small
\caption{Pre-treatment adjustment covariates and fold-local missing-data
handling in the final 79-episode cohort.  ``Missing'' is assessed before any
outer-fold preprocessing.  All covariates enter the joint propensity and the
response-Cox utility model together with their binary missingness indicators.}
\label{tab:covariates-missingness}
\setlength{\tabcolsep}{3pt}
\begin{tabular}{p{0.23\textwidth}p{0.23\textwidth}c p{0.37\textwidth}}
\toprule
Covariate & Release representation & Missing, $n$ (\%) & Fold-local processing \\
\midrule
Age & Continuous baseline age & 0 (0.0) & Training median-centering and SD scaling \\
Sex & Binary release code & 0 (0.0) & Training median-centering and SD scaling \\
AFP & Continuous reported baseline value & 0 (0.0) & $\log(1+\mathrm{AFP})$, then training median-centering and SD scaling \\
Child--Pugh class & Ordinal release code & 0 (0.0) & Training median-centering and SD scaling \\
BCLC stage & Ordinal release code & 0 (0.0) & Training median-centering and SD scaling \\
Tumor count & Binary release code & 0 (0.0) & Training median-centering and SD scaling \\
Tumor maximum diameter & Continuous baseline value & 41 (51.9) & Training-median imputation, then centering and SD scaling \\
Portal-vein invasion & Binary release code & 0 (0.0) & Training median-centering and SD scaling \\
\bottomrule
\end{tabular}
\end{table}

\subsection{Observed-Action Overlap and Weighting Diagnostics}

Figure~\ref{fig:s1-propensity} reports diagnostics for the fold-local joint
propensity models.  The empirical overlap plot shows that only 3\% of observed
regimens have estimated propensity below the prespecified 0.05 threshold.
Calibration and standardized-mean-difference plots are diagnostic rather than
proof of exchangeability: they assess measured clinical covariates only and
cannot rule out unmeasured confounding.  Missingness indicators are audited
separately so that the balance display remains interpretable.

\begin{figure}[htbp]
\centering
\includegraphics[width=\linewidth]{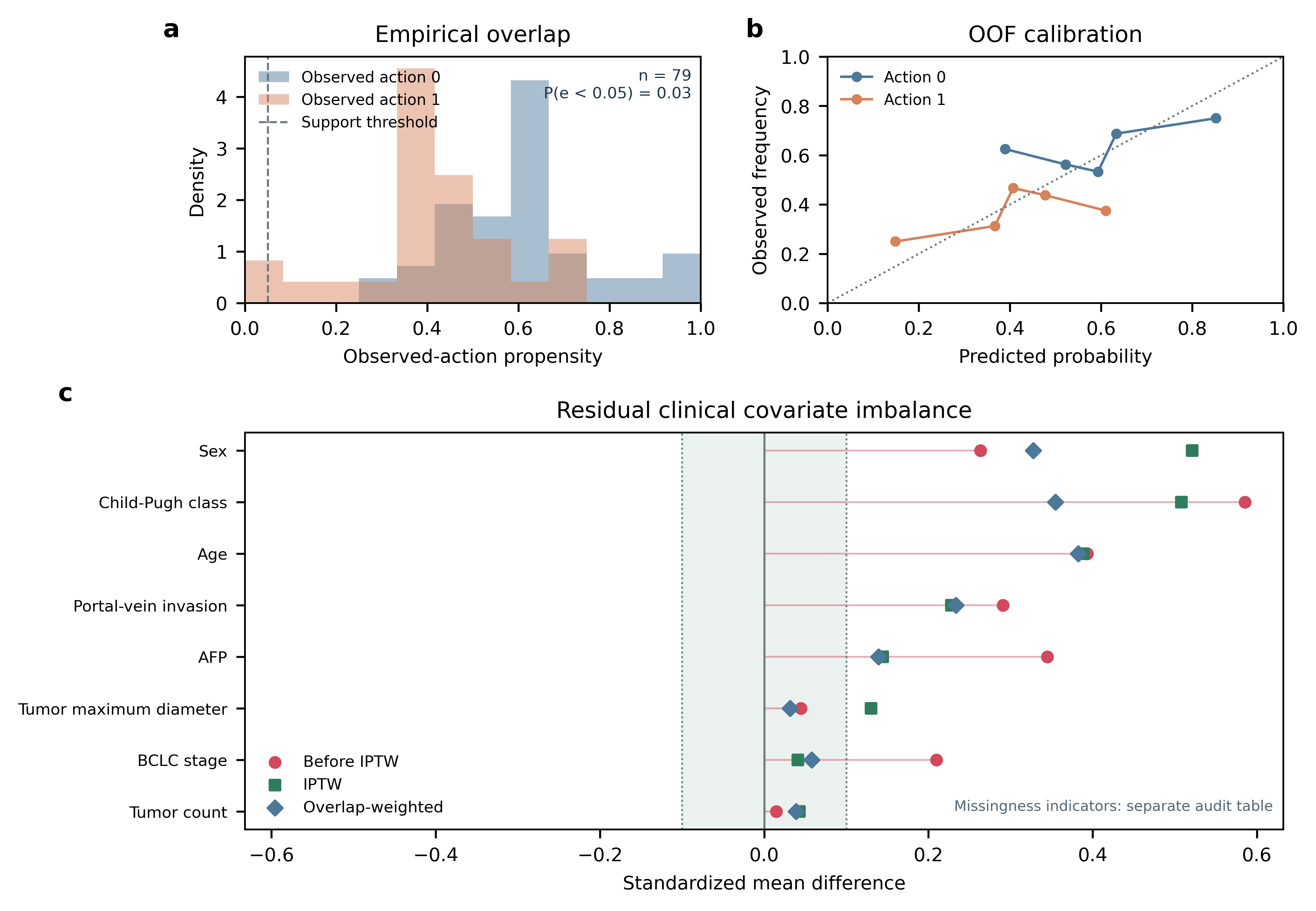}
\caption{\textbf{Observed-action propensity diagnostics.}
(a) Distribution of OOF observed-regimen propensities; the dashed line is the
predeclared support threshold. (b) Action-wise OOF calibration. (c) Standardized
mean differences before weighting, after IPTW, and after overlap weighting for
the non-missing clinical covariates.  These diagnostics characterize observed
support and measured-covariate balance; they do not establish exchangeability.}
\label{fig:s1-propensity}
\end{figure}

\subsection{Generated-State Transport and World-Count Stability}

The factual-state survival model and the planning-time generated-world path
serve different roles.  Figure~\ref{fig:s2-transport} therefore reports their
agreement rather than treating factual calibration as a certificate for
counterfactual calibration.  The paired generated-minus-factual C-index shift
is $+0.024$ with a bootstrap interval $[-0.024,0.068]$; this is a transport
diagnostic, not evidence of a clinical advantage.

\begin{figure}[htbp]
\centering
\includegraphics[width=\linewidth]{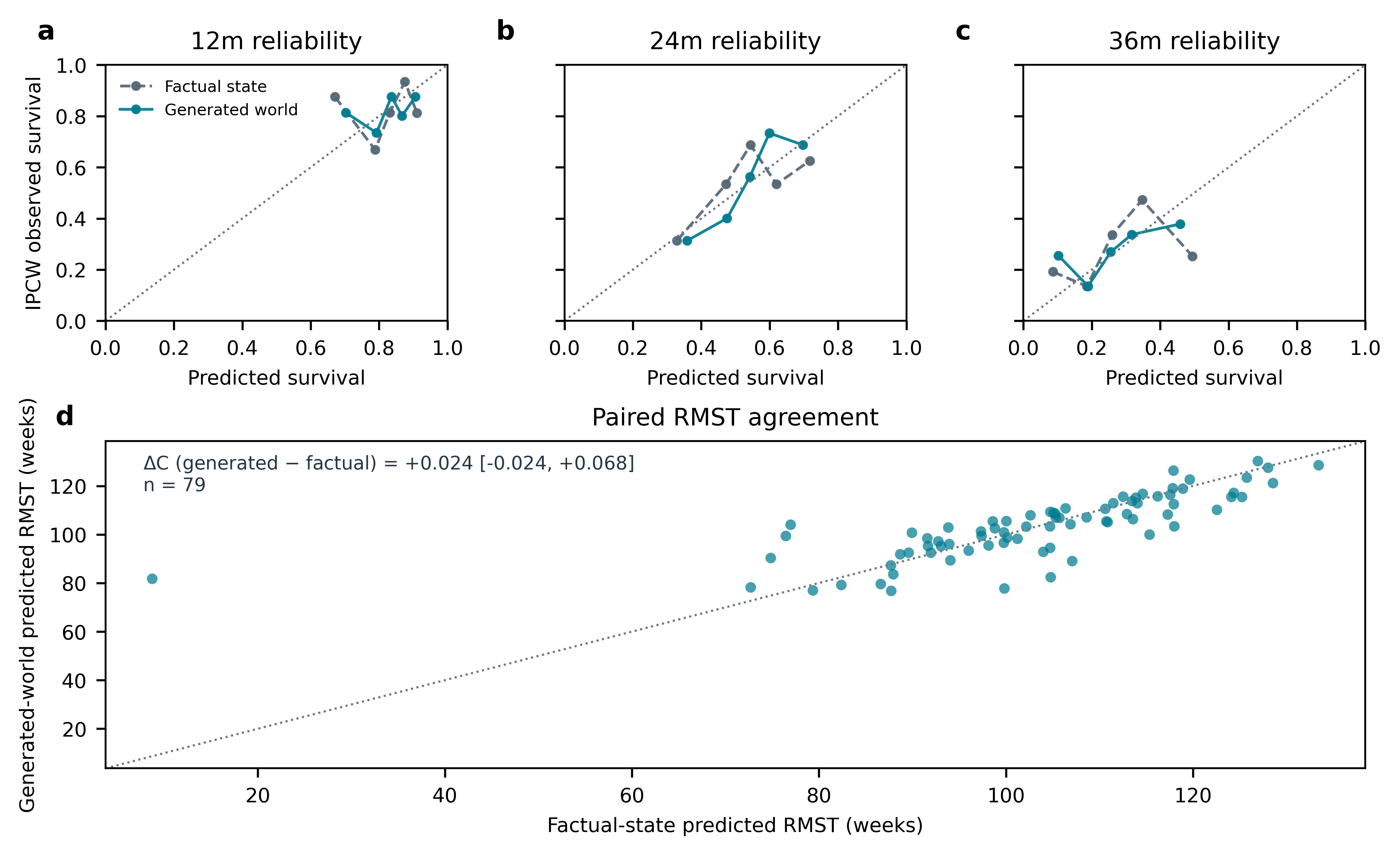}
\caption{\textbf{Factual-to-generated survival transport diagnostic.}
(a--c) IPCW reliability at 12, 24, and 36 months for factual and generated
states. (d) Patient-level agreement between factual-state and generated-world
predicted RMST.  The diagonal is identity.  The paired C-index change is
reported with a patient-bootstrap 95\% interval and is interpreted only as a
diagnostic for generated-state shift.}
\label{fig:s2-transport}
\end{figure}

Figure~\ref{fig:s3-world-count} checks the locked choice $K=8$ against larger
world counts.  Recommendations at $K=8$ agree with those at $K=32$ for 78 of
79 OOF patients (98.7\%).  The analysis establishes numerical recommendation
stability on this cohort; it does not calibrate the normal lower-confidence
score as a simultaneous uncertainty bound.

\begin{figure}[htbp]
\centering
\includegraphics[width=\linewidth]{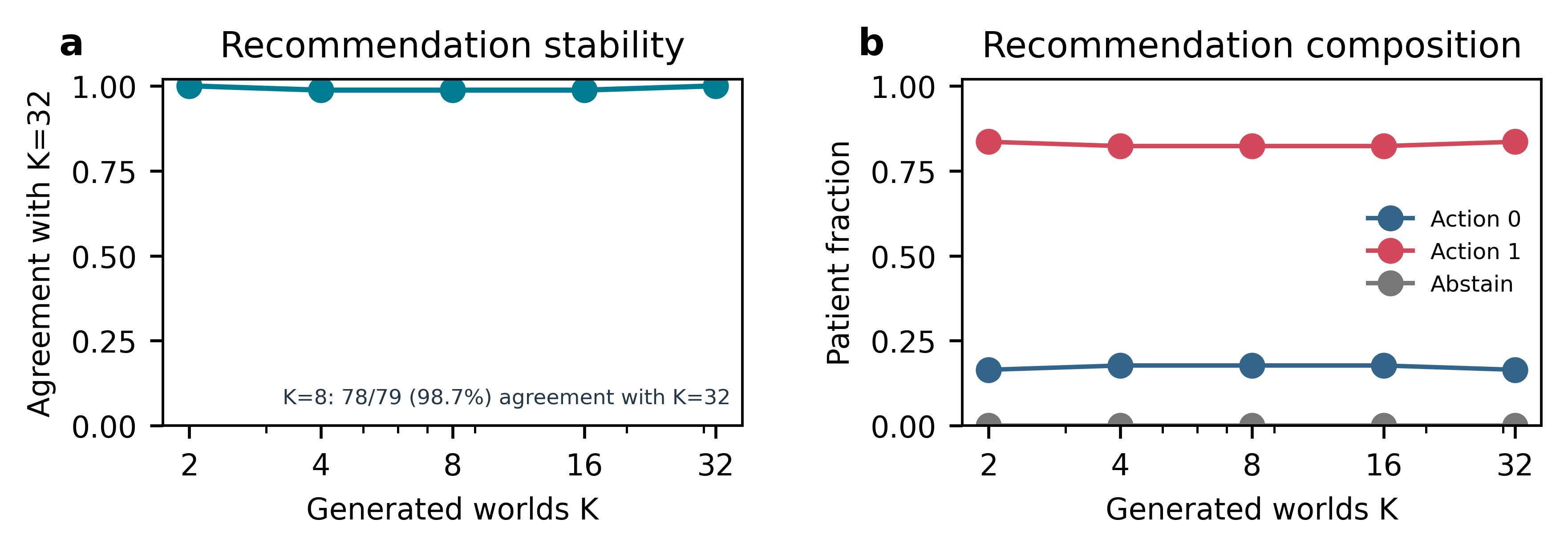}
\caption{\textbf{Stability with the number of sampled worlds.}
(a) OOF recommendation agreement with the $K=32$ reference over
$K\in\{2,4,8,16,32\}$.  At the locked $K=8$, 78/79 (98.7\%) recommendations
match $K=32$. (b) Corresponding recommended-regimen composition and abstention
fraction.  This supports the computational choice of $K=8$; it is not a
finite-sample confidence-calibration result.}
\label{fig:s3-world-count}
\end{figure}

\subsection{Pseudo-Label Audit}

Manual follow-up tumor masks are unavailable for all episodes.  The fold-local
segmenter therefore produces pseudo follow-up masks used to define the
operational response descriptors and tumor-region fidelity metric.
Figure~\ref{fig:s4-pseudomask} deliberately reports whether the available
confidence proxies rank OOF generation error.  The association is weak and its
bootstrap interval includes zero; accordingly, confidence is used as a
training weight and QC signal, not as a post hoc error certificate or a claim
of biological validity.

\begin{figure}[htbp]
\centering
\includegraphics[width=\linewidth]{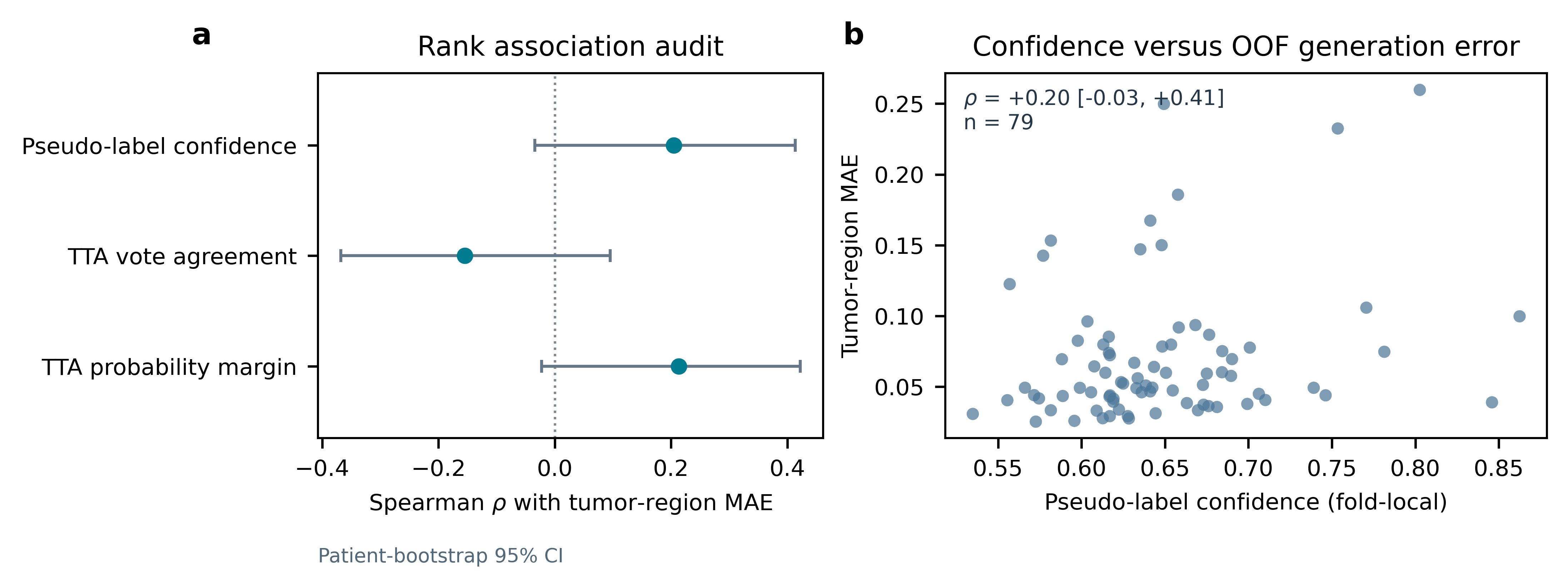}
\caption{\textbf{Pseudo follow-up mask confidence audit.}
(a) Spearman associations, with patient-bootstrap 95\% intervals, between
three fold-local confidence signals and OOF tumor-region MAE. (b) Direct
confidence-versus-error scatter.  The pseudo-label confidence association is
weak ($\rho=0.20$, 95\% CI $[-0.03,0.41]$), so these quantities are treated as
audits rather than calibrated error predictors.}
\label{fig:s4-pseudomask}
\end{figure}

\subsection{Known-Ground-Truth Semi-Synthetic Stress Tests}

The real cohort observes one regimen and one follow-up state per patient.  To
test estimator identities that cannot be checked in observational data, the
semi-synthetic experiment resamples clinical covariates and simulates
treatment, response, transition, and survival under a known structural model.
Figure~\ref{fig:s5-semisynthetic} evaluates recovery with increasing sample
size, the $2\times2$ double-robustness identity, positivity degradation,
factor-wise error propagation, Monte Carlo convergence, and hidden
confounding.  These panels validate the estimator under the specified
simulation mechanism; they do not constitute external clinical validation.

\begin{figure}[htbp]
\centering
\includegraphics[width=\linewidth]{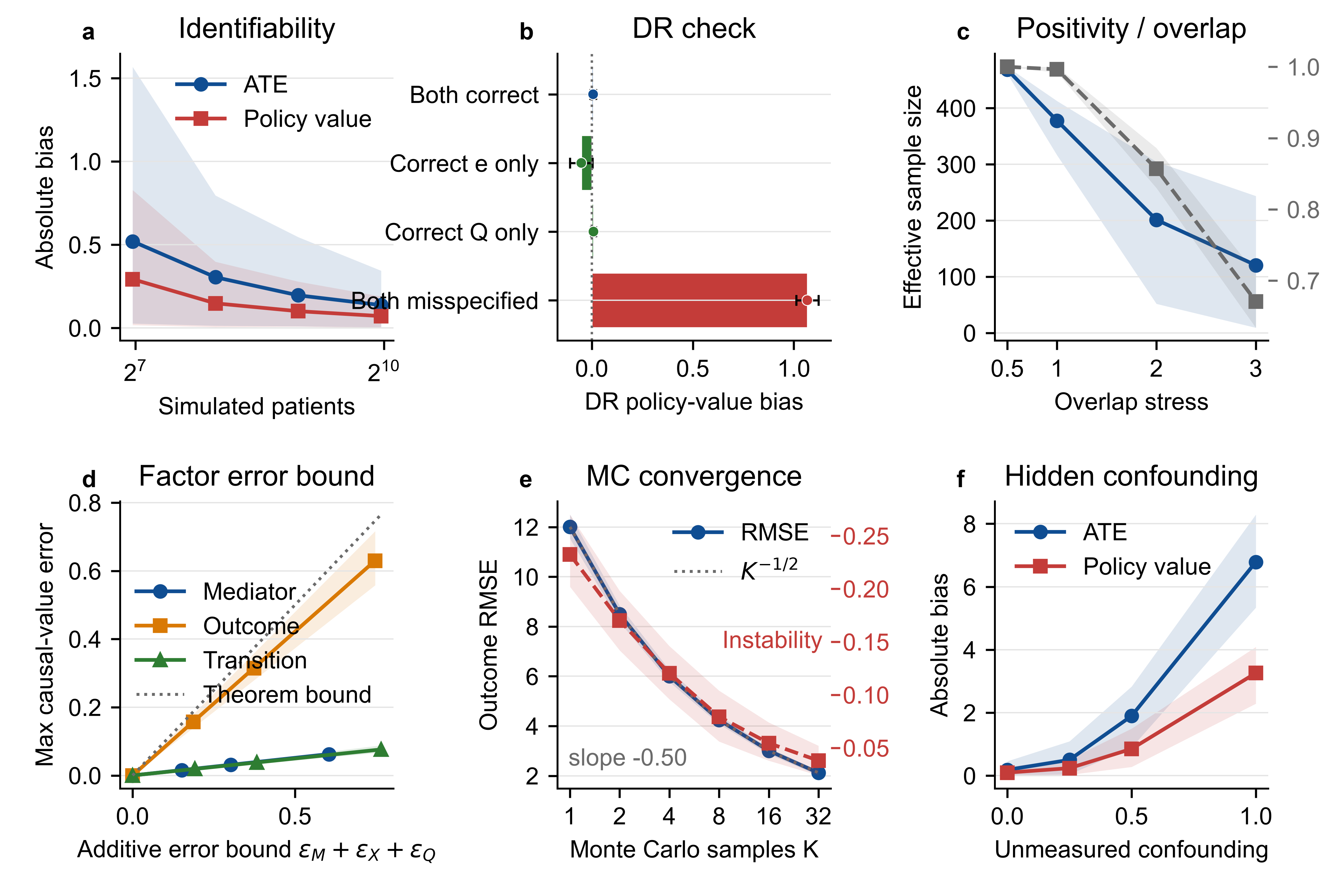}
\caption{\textbf{Known-ground-truth semi-synthetic validation.}
(a) ATE and policy-value recovery with increasing simulated sample size.
(b) Double-robustness identity under correct or misspecified propensity and
outcome nuisances. (c) Positivity stress. (d) Factor-wise causal-value bound.
(e) Monte Carlo convergence with sampled-world count. (f) Sensitivity to an
unmeasured common cause.  The experiment tests the behavior of the estimators
and theoretical bounds under the simulation SCM, not clinical efficacy.}
\label{fig:s5-semisynthetic}
\end{figure}

\subsection{Primary Factual-Only AIPW Policy Value}

The primary policy-value analysis separates policy construction from value
estimation.  CaMeWM's support-restricted policy is frozen from its OOF planning
outputs, whereas the AIPW outcome nuisance is fit fold-locally using only
pre-treatment clinical history, factual regimen, and observed survival.  It
does not consume generated worlds, response descriptors, pseudo masks, or
post-treatment CT.  On all 79 OOF patients, the factual-only AIPW estimate is
113.22 weeks (95\% CI 102.34--124.84), compared with the factual observed-regimen
value of 103.62 weeks.  The estimated gain is 9.60 weeks (95\% CI 0.61--18.13).
The minimum recommended-action support is 0.363, policy ESS is 37.4, and the
largest correction weight is 2.66 under a prespecified cap of 10; no correction
weight is clipped.  These results provide an assumption-conditional,
retrospective policy-value signal, not prospective evidence of treatment
benefit.

\begin{table}[htbp]
\centering
\small
\caption{Primary factual-only AIPW evaluation of the frozen CaMeWM policy.
The outcome nuisance uses pre-treatment clinical history, factual regimen, and
observed survival only.}
\label{tab:factual-aipw}
\begin{tabular}{lr}
\toprule
Quantity & OOF estimate \\
\midrule
Patients & 79 \\
Factual observed-regimen RMST (weeks) & 103.62 \\
Factual-only AIPW policy value (weeks) & 113.22 [102.34, 124.84] \\
Factual-only AIPW gain (weeks) & 9.60 [0.61, 18.13] \\
Policy effective sample size & 37.4 \\
Mean recommended-action support & 0.597 \\
Minimum recommended-action support & 0.363 \\
Maximum correction weight / cap & 2.66 / 10 \\
Correction-weight clipping rate & 0.000 \\
\bottomrule
\end{tabular}
\end{table}

\subsection{Support-Threshold and Weight-Cap Sensitivity}

The primary observational analysis locks the support threshold at
$\eta=0.05$ and the inverse-probability weight cap at 10.  To assess the
dependence of the historical generated-state plug-in diagnostic on these design choices, we sweep
$\eta\in\{0.01,0.025,0.05,0.10,0.20\}$ and caps in
$\{2,5,10,20\}$ using the same five-fold OOF predictions.  Figure~\ref{fig:s6-sensitivity}
reports the resulting plug-in RMST diagnostic and effective sample size.  At the
locked $\eta=0.05$, 77/79 patients retain both observed- and
recommended-regimen support for this diagnostic.  This is deliberately more
stringent than the planner's abstention rule, which only requires at least one
candidate regimen to be supported.  Thus, the sensitivity sweep characterizes
the supported evaluation cohort and weight stability; it does not re-fit the
models, re-optimize the policy, or replace the primary factual-only AIPW
analysis above.

\begin{figure}[htbp]
\centering
\includegraphics[width=0.88\linewidth]{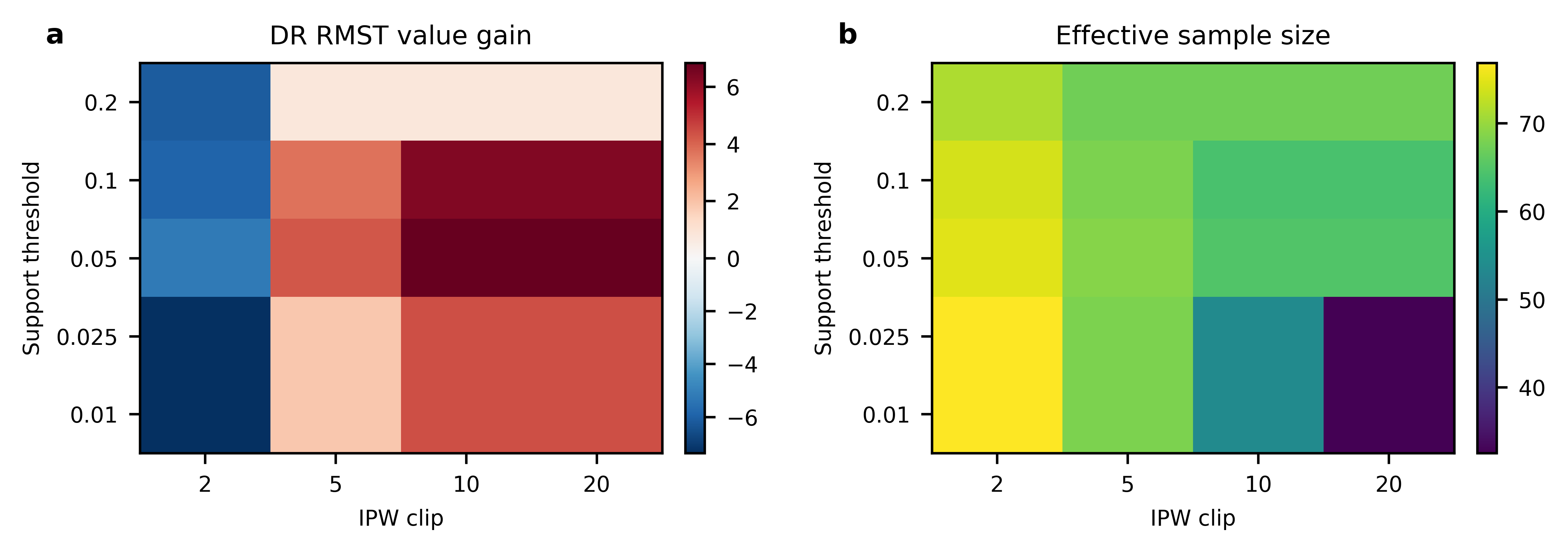}
\caption{\textbf{Support-threshold and IPW-cap sensitivity of a historical
generated-state plug-in diagnostic.}
Each cell uses the fixed five-fold OOF planning and propensity predictions.
Rows vary the joint observed/recommended support threshold $\eta$ and columns
vary the IPW cap.  Panel (a) reports the descriptive plug-in RMST value gain, while
panel (b) reports effective sample size.  The locked primary setting is
$\eta=0.05$ with cap 10.  This analysis assesses overlap and weight sensitivity;
it is not the primary factual-only AIPW estimand and does not convert an
observational interval into evidence of clinical benefit.}
\label{fig:s6-sensitivity}
\end{figure}

\clearpage

\clearpage
\phantomsection
\pdfbookmark[0]{References}{references}
\bibliographystyle{plainnat}
\bibliography{references}

@article{alinezhad2026cdm,
  author={Alinezhad, Farbod and Cao, Jianfei and Young, Gary J. and Post, Brady},
  title={Causal Diffusion Models for Counterfactual Outcome Distributions in Longitudinal Data},
  journal={arXiv preprint arXiv:2604.12992}, year={2026}
}

@article{austin2015iptw,
  author={Austin, Peter C. and Stuart, Elizabeth A.},
  title={Moving towards Best Practice when Using Inverse Probability of Treatment Weighting (IPTW) Using the Propensity Score to Estimate Causal Treatment Effects in Observational Studies},
  journal={Statistics in Medicine}, year={2015}, volume={34}, number={28}, pages={3661--3679}
}

@article{bang2005dr,
  author={Bang, Heejung and Robins, James M.},
  title={Doubly Robust Estimation in Missing Data and Causal Inference Models},
  journal={Biometrics}, year={2005}, volume={61}, number={4}, pages={962--973}
}

@article{chernozhukov2018dml,
  author={Chernozhukov, Victor and Chetverikov, Denis and Demirer, Mert and Duflo, Esther and Hansen, Christian and Newey, Whitney and Robins, James},
  title={Double/Debiased Machine Learning for Treatment and Structural Parameters},
  journal={The Econometrics Journal}, year={2018}, volume={21}, number={1}, pages={C1--C68}
}

@article{cinquini2025causal,
  title={A practical approach to causal inference over time},
  author={Cinquini, Martina and Beretta, Isacco and Ruggieri, Salvatore and Valera, Isabel},
  journal={Proceedings of the AAAI Conference on Artificial Intelligence},
  volume={39},
  number={14},
  pages={14832--14839},
  year={2025}

}

@article{cox1972regression,
  author={Cox, David R.}, title={Regression Models and Life-Tables}, journal={Journal of the Royal Statistical Society: Series B (Methodological)}, year={1972}, volume={34}, number={2}, pages={187--202}
}

@article{ding2025clarity,
  author={Ding, Tianxingjian and Zou, Yuanhao and Chen, Chen and Shah, Mubarak and Tian, Yu},
  title={{CLARITY}: Medical World Model for Guiding Treatment Decisions by Modeling Context-Aware Disease Trajectories in Latent Space},
  journal={arXiv preprint arXiv:2512.08029}, year={2025}
}

@inproceedings{frauen2023deepace,
  author={Frauen, Dennis and Hatt, Tobias and Melnychuk, Valentyn and Feuerriegel, Stefan},
  title={Estimating Average Causal Effects from Patient Trajectories},
  booktitle={Proceedings of the AAAI Conference on Artificial Intelligence}, year={2023}, volume={37}, pages={7586--7594}
}

@article{ha2018worldmodels,
  author={Ha, David and Schmidhuber, J{\"u}rgen},
  title={World Models},
  journal={arXiv preprint arXiv:1803.10122},
  year={2018}
}

@article{hafner2023mastering,
  author = {Hafner, Danijar and Pasukonis, Jurgis and Ba, Jimmy and Lillicrap, Timothy},
  title = {Mastering Diverse Domains through World Models},
  journal = {arXiv preprint arXiv:2301.04104},
  year = {2023},
  doi = {10.48550/arXiv.2301.04104},
  url = {https://arxiv.org/abs/2301.04104}
}

@book{hernan2020whatif,
  author={Hern{\'a}n, Miguel A. and Robins, James M.}, title={Causal Inference: What If}, publisher={Chapman and Hall/CRC}, year={2020}
}

@article{horii2024uncertainty,
  title={Uncertainty quantification in heterogeneous treatment effect estimation with gaussian-process-based partially linear model},
  author={Horii, Shunsuke and Chikahara, Yoichi},
  journal={Proceedings of the AAAI Conference on Artificial Intelligence},
  volume={38},
  number={18},
  pages={20420--20429},
  year={2024}
}

@article{koju2025surgical,
  author = {Koju, Saurabh and Bastola, Saurav and Shrestha, Prashant and Amgain, Sanskar and Shrestha, Yash Raj and Poudel, Rudra P. K. and Bhattarai, Binod},
  title = {Surgical Vision World Model},
  journal = {arXiv preprint arXiv:2503.02904},
  year = {2025},
  doi = {10.48550/arXiv.2503.02904},
  url = {https://arxiv.org/abs/2503.02904}
}

@article{kraljevic2024foresight,
  author = {Kraljevic, Zeljko and Bean, Dan and Shek, Anthony and Bendayan, Rebecca and Hemingway, Harry and Au Yeung, Joshua and Deng, Alexander and Baston, Alfred and Ross, Jack and Idowu, Esther and Teo, James T. and Dobson, Richard J. B.},
  title = {{Foresight}---a Generative Pretrained Transformer for Modelling of Patient Timelines Using Electronic Health Records: A Retrospective Modelling Study},
  journal = {The Lancet Digital Health},
  year = {2024},
  volume = {6},
  number = {4},
  pages = {e281--e290},
  doi = {10.1016/S2589-7500(24)00025-6},
  url = {https://doi.org/10.1016/S2589-7500(24)00025-6}
}

@inproceedings{lee2018deephit,
  author={Lee, Changhee and Zame, William and Yoon, Jinsung and van der Schaar, Mihaela}, title={DeepHit: A Deep Learning Approach to Survival Analysis with Competing Risks}, booktitle={Proceedings of the AAAI Conference on Artificial Intelligence}, year={2018}, volume={32}
}

@article{li2024cfi,
 title={Causal representation learning via counterfactual intervention},
  author={Li, Xiutian and Sun, Siqi and Feng, Rui},
  journal={Proceedings of the AAAI conference on artificial intelligence},
  volume={38},
  number={4},
  pages={3234--3242},
  year={2024}
}

@article{moawad2023hcctace,
  author={Moawad, Ahmed W. and Morshid, Ahmed and Khalaf, Ahmed M. and Elmohr, Mohamed M. and Hazle, John D. and Fuentes, David and Badawy, Mohamed and Kaseb, Ahmed and Hassan, Mohamed and Mahvash, Armeen and Szklaruk, Janio and Qayyum, Ali and Abusaif, Riham and Bennett, William C. and Nolan, Timothy and Camp, Jason J. and Elsayes, Khaled M.},
  title={Multimodality Annotated Hepatocellular Carcinoma Data Set Including Pre- and Post-TACE with Imaging Segmentation}, journal={Scientific Data}, year={2023}, volume={10}, pages={33}
}

@book{pearl2009causality,
  title={Causality},
  author={Pearl, Judea},
  year={2009},
  publisher={Cambridge university press}
}

@article{royston2013rmst,
  author={Royston, Patrick and Parmar, Mahesh K. B.}, title={Restricted Mean Survival Time: An Alternative to the Hazard Ratio for the Design and Analysis of Randomized Trials with a Time-to-Event Outcome}, journal={BMC Medical Research Methodology}, year={2013}, volume={13}, pages={152}
}

@inproceedings{shalit2017ite,
  author={Shalit, Uri and Johansson, Fredrik D. and Sontag, David}, title={Estimating Individual Treatment Effect: Generalization Bounds and Algorithms}, booktitle={Proceedings of the 34th International Conference on Machine Learning}, year={2017}, volume={70}, pages={3076--3085}
}

@article{uno2011cindex,
  author={Uno, Hajime and Cai, Tianxi and Pencina, Michael J. and D'Agostino, Ralph B. and Wei, L. J.}, title={On the C-Statistics for Evaluating Overall Adequacy of Risk Prediction Procedures with Censored Survival Data}, journal={Statistics in Medicine}, year={2011}, volume={30}, number={10}, pages={1105--1117}
}

@article{wang2004ssim,
  author={Wang, Zhou and Bovik, Alan C. and Sheikh, Hamid R. and Simoncelli, Eero P.}, title={Image Quality Assessment: From Error Visibility to Structural Similarity}, journal={IEEE Transactions on Image Processing}, year={2004}, volume={13}, number={4}, pages={600--612}
}

@inproceedings{yang2025mewm,
  author={Yang, Y. and Wang, Z.-Y. and Liu, Q. and Sun, S. and Wang, K. and Chellappa, R. and Zhou, Z. and Yuille, A. and Zhu, L. and Zhang, Y.-D. and Chen, J.}, title={Medical World Model}, booktitle={Proceedings of the IEEE/CVF International Conference on Computer Vision}, year={2025}, pages={8319--8329}
}

@inproceedings{yue2025chexworld,
  author = {Yue, Yang and Wang, Yulin and Tao, Chenxin and Liu, Pan and Song, Shiji and Huang, Gao},
  title = {{CheXWorld}: Exploring Image World Modeling for Radiograph Representation Learning},
  booktitle = {Proceedings of the IEEE/CVF Conference on Computer Vision and Pattern Recognition},
  year = {2025},
  pages = {20778--20788},
  url = {https://openaccess.thecvf.com/content/CVPR2025/html/Yue_CheXWorld_Exploring_Image_World_Modeling_for_Radiograph_Representation_Learning_CVPR_2025_paper.html}
}

@inproceedings{yue2025echoworld,
  author = {Yue, Yang and Wang, Yulin and Jiang, Haojun and Liu, Pan and Song, Shiji and Huang, Gao},
  title = {{EchoWorld}: Learning Motion-Aware World Models for Echocardiography Probe Guidance},
  booktitle = {Proceedings of the IEEE/CVF Conference on Computer Vision and Pattern Recognition},
  year = {2025},
  pages = {25993--26003},
  url = {https://openaccess.thecvf.com/content/CVPR2025/html/Yue_EchoWorld_Learning_Motion-Aware_World_Models_for_Echocardiography_Probe_Guidance_CVPR_2025_paper.html}
}
\end{document}